\newtheorem{theorem}{Theorem}
\newtheorem{lemma}{Lemma}
\newtheorem{assumption}{Assumption}
\DeclareMathOperator*{\KL}{KL}
\DeclareMathOperator{\Tr}{Tr}
\DeclareMathOperator{\Var}{Var}
\DeclareMathOperator{\Sup}{Sup}
\DeclareMathOperator{\Gen}{Gen}
\icmltitlerunning{Positive-Negative Momentum}
\begin{document}

\twocolumn[
\icmltitle{Positive-Negative Momentum: \\ Manipulating Stochastic Gradient Noise to Improve Generalization}

\begin{icmlauthorlist}
\icmlauthor{Zeke Xie}{utokyo,riken}
\icmlauthor{Li Yuan}{nus}
\icmlauthor{Zhanxing Zhu}{bibdr}
\icmlauthor{Masashi Sugiyama}{riken,utokyo}
\end{icmlauthorlist}

\icmlaffiliation{utokyo}{The University of Tokyo}
\icmlaffiliation{riken}{RIKEN Center for AIP}
\icmlaffiliation{nus}{National University of Singapore}
\icmlaffiliation{bibdr}{Beijing Institute of Big Data Research}

\icmlcorrespondingauthor{Zeke Xie}{xie@ms.k.u-tokyo.ac.jp}

\icmlkeywords{Deep Learning, Gradient Noise, Momentum, Optimization}

\vskip 0.3in
]



\printAffiliationsAndNotice{}  

\begin{abstract}
It is well-known that stochastic gradient noise (SGN) acts as implicit regularization for deep learning and is essentially important for both optimization and generalization of deep networks. Some works attempted to artificially simulate SGN by injecting random noise to improve deep learning. However, it turned out that the injected simple random noise cannot work as well as SGN, which is anisotropic and parameter-dependent. For simulating SGN at low computational costs and without changing the learning rate or batch size, we propose the Positive-Negative Momentum (PNM) approach that is a powerful alternative to conventional Momentum in classic optimizers. The introduced PNM method maintains two approximate independent momentum terms. Then, we can control the magnitude of SGN explicitly by adjusting the momentum difference. We theoretically prove the convergence guarantee and the generalization advantage of PNM over Stochastic Gradient Descent (SGD). By incorporating PNM into the two conventional optimizers, SGD with Momentum and Adam, our extensive experiments empirically verified the significant advantage of the PNM-based variants over the corresponding conventional Momentum-based optimizers.
\end{abstract}

\section{Introduction}
\label{sec:intro}

Stochastic optimization methods, such as Stochastic Gradient Descent (SGD), have been popular and even necessary in training deep neural networks \citep{lecun2015deep}. It is well-known that stochastic gradient noise (SGN) in stochastic optimization acts as implicit regularization for deep learning and is essentially important for both optimization and generalization of deep neural networks \citep{hochreiter1995simplifying,hochreiter1997flat,hardt2016train,xie2020diffusion,wu2020direction}. The theoretical mechanism behind the success of stochastic optimization, particularly the SGN, is still a fundamental issue in deep learning theory. 

\textbf{SGN matters.} From the viewpoint of minima selection, SGN can help find flatter minima, which tend to yield better generalization performance \citep{hochreiter1995simplifying,hochreiter1997flat}. For this reason, how SGN selects flat minima has been investigated thoroughly. For example, \citet{zhu2019anisotropic} quantitatively revealed that SGN is better at selecting flat minima than trivial random noise, as SGN is anisotropic and parameter-dependent. \citet{xie2020diffusion} recently proved that, due to the anisotropic and parameter-dependent SGN, SGD favors flat minima even exponentially more than sharp minima. From the viewpoint of optimization dynamics, SGN can accelerate escaping from saddle points via stochastic perturbations to gradients~\citep{jin2017escape,daneshmand2018escaping,staib2019escaping,xie2020adai}.

\textbf{Manipulating SGN.} Due to the benefits of SGN, improving deep learning by manipulating SGN has become a popular topic. There are mainly two approaches along this line. 

The first approach is to control SGN by tuning the hyperparameters, such as the learning rate and batch size, as the magnitude of SGN has been better understood recently. It is well-known \citep{mandt2017stochastic} that the magnitude of SGN in continuous-time dynamics of SGD is proportional to the ratio of the learning rate $\eta$ and the batch size $B$, namely $\frac{\eta}{B}$. \citet{he2019control} and \citet{li2019towards} reported that increasing the ratio $\frac{\eta}{B}$ indeed can improve test performance due to the stronger implicit regularization of SGN. However, this method is limited and not practical for at least three reasons. First, training with a too small batch size is computationally expensive per epoch and often requires more epochs for convergence \citep{hoffer2017train,zhang2019algorithmic}. Second, increasing the learning rate only works in a narrow range, since too large initial learning rates may lead to optimization divergence or bad convergence \citep{keskar2017large,masters2018revisiting,xie2020stable}. In practice, one definitely cannot use a too large learning rate to guarantee good generalization \citep{keskar2017large,masters2018revisiting}. Third, decaying the ratio $\frac{\eta}{B}$ during training (via learning rate decay) is almost necessary for the convergence of stochastic optimization as well as the training of deep networks \citep{smith2018don}. Thus, controlling SGN by adjusting the ratio $\frac{\eta}{B}$ cannot consistently be performed during the entire training process.

The second approach is to simulate SGN by using artificial noise. Obviously, we could simulate SGN well only if we clearly understood the structure of SGN. Related works \citep{daneshmand2018escaping,zhu2019anisotropic,xie2020diffusion,wen2020empirical} studied the noise covariance structure, and there still exists a dispute about the noise type of SGD and its role in generalization \citep{simsekli2019tail,xie2020diffusion}. It turned out that, while injecting small Gaussian noise into SGD may improve generalization  \citep{an1996effects,neelakantan2015adding,zhou2019toward,xie2020artificial}, unfortunately, Gradient Descent (GD) with artificial noise still performs much worse than SGD (i.e., SGD can be regarded as GD with SGN). \citet{wu2020noisy} argued that GD with multiplicative sampling noise may generalize as well as SGD, but the considered SGD baseline was weak due to the absence of weight decay and common training tricks. Thus, this approach cannot work well in practice.

 \begin{table*}[t]
\caption{PNM versus conventional Momentum. We report the mean and the standard deviations (as the subscripts) of the optimal test errors computed over three runs of each experiment. The proposed PNM-based methods show significantly better generalization than conventional momentum-based methods. Particularly, as Theorem \ref{pr:pnmgeneralization} indicates, Stochastic PNM indeed consistently outperforms the conventional baseline, SGD.}
\label{table:cifar}
\begin{center}
\begin{small}
\begin{sc}
\resizebox{\textwidth}{!}{%
\begin{tabular}{ll | lllllllllllll}
\toprule
Dataset & Model  & PNM & AdaPNM & SGD &Adam  &AMSGrad & AdamW & AdaBound  & Padam & Yogi & RAdam  \\
\midrule
CIFAR-10   &ResNet18        & $\mathbf{4.48}_{0.09}$  & $4.94_{0.05}$  & $5.01_{0.03}$  & $6.53_{0.03} $ & $6.16_{0.18}$ &  $5.08_{0.07}$ &    $5.65_{0.08}$ & $5.12_{0.04}$ & $5.87_{0.12}$ & $6.01_{0.10}$   \\
                   &VGG16            & $6.26_{0.05}$  &  $\mathbf{5.99}_{0.11}$  & $6.42_{0.02}$   & $7.31_{0.25} $  & $7.14_{0.14}$  & $6.48_{0.13}$ &  $6.76_{0.12}$ & $6.15_{0.06}$ & $6.90_{0.22}$ & $6.56_{0.04}$  \\
CIFAR-100  &ResNet34    & $20.59_{0.29}$ &  $\mathbf{20.41}_{0.18}$ & $21.52_{0.37}$  & $27.16_{0.55} $  & $25.53_{0.19}$ &  $22.99_{0.40}$ & $22.87_{0.13}$ & $22.72_{0.10}$ & $23.57_{0.12}$ & $24.41_{0.40}$   \\
                  &DenseNet121   &  $\mathbf{19.76}_{0.28}$  & $20.68_{0.11}$ & $19.81_{0.33}$  & $25.11_{0.15} $  & $24.43_{0.09}$ &  $21.55_{0.14}$ &  $22.69_{0.15}$ & $21.10_{0.23}$ & $22.15_{0.36}$ & $22.27_{0.22}$  \\
                   &GoogLeNet     & $20.38_{0.31}$ &  $\mathbf{20.26}_{0.21}$ & $21.21_{0.29}$  & $26.12_{0.33} $  & $25.53_{0.17}$ &  $21.29_{0.17}$ &  $23.18_{0.31}$ & $21.82_{0.17}$ & $24.24_{0.16}$ & $22.23_{0.15}$ \\
                    
\bottomrule
\end{tabular}
}
\end{sc}
\end{small}
\end{center}
\end{table*}

\textbf{Contribution.} Is it possible to manipulate SGN without changing the learning rate or batch size? Yes. In this work, we propose Positive-Negative Momentum\footnote{Code: \url{https://github.com/zeke-xie/Positive-Negative-Momentum}.}  (PNM) for enhancing SGN at the low computational and coding costs. We summarize four contributions in the following.
\begin{itemize}
\item We introduce a novel method for manipulating SGN without changing the learning rate or batch size. The proposed PNM strategy can easily replace conventional Momentum in classical optimizers, including SGD and Adaptive Momentum Estimation (Adam).
\item We theoretically prove that PNM has a convergence guarantee similar to conventional Momentum. 
\item Within the PAC-Bayesian framework, we theoretically prove that PNM may have a tighter generalization bound than SGD.
\item We provide extensive experimental results to verify that PNM can indeed make significant improvements over conventional Momentum, shown in Table \ref{table:cifar}.
\end{itemize}

In Section \ref{sec:method}, we introduce the proposed methods and the motivation of enhancing SGN. In Section \ref{sec:convergence}, we present the convergence analysis. In Section \ref{sec:generalization}, we present the generalization analysis. In Section \ref{sec:empirical}, we conduct empirical analysis. In Section \ref{sec:conclusion}, we conclude our main work.

\section{Methodology}
\label{sec:method}

In this section, we introduce the proposed PNM method and explain how it can manipulate SGN. 

\textbf{Notation.} Suppose the loss function is $f(\theta)$, $\theta$ denotes the model parameters, the learning rate is $\eta$, the batch size is $B$, and the training data size is $N$. The basic gradient-descent-based updating rule can written as
\begin{align}
\theta_{t+1} = \theta_{t} - \eta g_{t}.
\end{align}
Note that $g_{t} = \nabla f(\theta_{t})$ for deterministic optimization, while $g_{t} = \nabla f(\theta_{t}) + \xi_{t} $ for stochastic optimization, where $ \xi_{t} $ indicates SGN. As SGN is from the difference between SGD and GD and the minibatch samples are uniformly chosen from the whole training dataset, it is commonly believed that $g_{t}$ is an unbiased estimator of the true gradient $\nabla f(\theta_{t})$ for stochastic optimization. Without loss of generality, we only consider one-dimensional case in Sections \ref{sec:method} and \ref{sec:convergence}.The mean and the variance of SGN can be rewritten as $\mathbb{E}[\xi] = 0$ and $\Var(\xi) = \sigma^{2}$, respectively. We may use $\sigma$ as a measure of the noise magnitude.

\textbf{Conventional Momentum.} 
We first introduce the conventional Momentum method, also called Heavy Ball (HB), seen in Algorithm \ref{algo:shb} \citep{zavriev1993heavy}. We obtain vanilla SGD by $\beta_{1}=0$ and $\beta_{3}=1$, and obtain common SGD with Momentum by $\beta_{1}=0.9$ and $\beta_{3}=1$. Algorithm \ref{algo:shb} is the actual PyTorch SGD\citep{paszke2019pytorch}, and can be theoretically reduced to SGD with a different learning rate. Adam uses the exponential moving average of past stochastic gradients as momentum by $\beta_{3}=1-\beta_{1}$. The conventional Momentum can be written as
\begin{align}
m_{t} = \sum_{k=0}^{t} \beta_{3} \beta_{1}^{t-k} g_{k},
\end{align}
which is the estimated gradient for updating model parameters. Then we approximately have $\mathbb{E}[m] \approx \frac{\beta_{3}}{1-\beta_{1}} \nabla f(\theta)$. The stochastic noise in momentum is given by
\begin{align}
\label{eq:hbnoise}
\xi^{\mathrm{hb}}_{t} = \sum_{k=0}^{t} \beta_{3} \beta_{1}^{t-k} \xi_{k}.
\end{align}
 Without loss of generality, we use the Adam-style Momentum with $\beta_{3}=1-\beta_{1}$ in our analysis. Thus, the conventional momentum does not control the gradient noise magnitude, because, for large $t$, 
\begin{align}
\label{eq:hbnoisevar}
\Var(\xi^{\mathrm{hb}}) = \beta_{3} \frac{1-\beta_{1}^{t+1}}{1-\beta_{1}} \sigma^{2} \approx  \sigma^{2} .
\end{align}
We have assumed that SGN $\xi$ is approximately independent. Since this approximation holds well in the limit of $\frac{B}{N} \rightarrow 0$, this assumption is common in theoretical analysis \citep{mandt2017stochastic}.

\textbf{Basic Idea.} Our basic idea for manipulating SGN is simple. Suppose that $g^{(a)}$ and $g^{(b)}$ are two independent unbiased estimators of $\nabla f(\theta)$. Then their weighted average is
\begin{align}
\bar{g} = & (1+\beta_{0}) g^{(a)} - \beta_{0} g^{(b)} \nonumber \\
        = &  \nabla f(\theta) + \bar{\xi},
\end{align}
where $\bar{\xi} = (1+\beta_{0}) \xi^{(a)} - \beta_{0} \xi^{(b)}$. If $\beta_{0} > 0$, for the generated noisy gradient $\bar{g}$, we have $\mathbb{E}[\bar{g}] = \nabla f(\theta) $ and $\Var(\bar{\xi}) = [(1+\beta_{0})^{2} + \beta_{0}^{2}] \sigma^{2} $. In this way, we can control the noise magnitude by $\beta_{0}$ without changing the expectation of the noisy gradient.

\textbf{Positive-Negative Momentum.} 
Inspired by this simple idea, we combine the positive-negative averaging with the conventional Momentum method. For simplicity, we assume that $t$ is an odd number. We maintain two independent momentum terms as
\begin{align}
\begin{cases}
& m_{t}^{\mathrm{(odd)}} = \sum_{k=1,3,\ldots,t} \beta_{3} \beta_{1}^{t-k} g_{k} , \\
& m_{t}^{\mathrm{(even)}} = \sum_{k=0,2,\ldots,t-1} \beta_{3} \beta_{1}^{t-k} g_{k} ,
 \end{cases}  
\end{align}
by using two alternate sequences of past gradients, respectively. Then we use the positive-negative average,
\begin{align}
\label{eq:pnm}
m_{t} = (1+\beta_{0}) m_{t}^{\mathrm{(odd)}} - \beta_{0} m_{t}^{\mathrm{(even)}},
\end{align}
as an estimated gradient for updating model parameters. Similarly, if $t$ is an even number, we let $m_{t} = (1+\beta_{0}) m_{t}^{\mathrm{(even)}} - \beta_{0} m_{t}^{\mathrm{(odd)}}$. When $\beta_{0} > 0$, one momentum term has a positive coefficient and another one has a negative coefficient. Thus, we call it a positive-negative momentum pair. 

The stochastic noise $\xi^{\mathrm{pnm}}_{t} $ in the positive-negative momentum pair is given by
\begin{align}
\label{eq:pnmnoise}
   \xi^{\mathrm{pnm}}_{t}  = & (1+\beta_{0})  \sum_{k=1,3,\ldots,t} \beta_{3} \beta_{1}^{t-k} \xi_{k}  -  \nonumber \\
 &   \beta_{0} \sum_{k=0,2,\ldots,t-1} \beta_{3} \beta_{1}^{t-k} \xi_{k}.
\end{align}
For large $t$, we write the noise variance as 
\begin{align}
\label{eq:pnmnoisevar}
\Var(\xi^{\mathrm{pnm}}) \approx [(1+\beta_{0})^{2} + \beta_{0}^{2}] \sigma^{2} .
\end{align}
The noise magnitude of positive-negative momentum in Equation \eqref{eq:pnm} is $\sqrt{(1+\beta_{0})^{2} + \beta_{0}^{2}}$ times the noise magnitude of conventional Momentum. 

While computing the gradients twice per iteration by using two minibatches can be also used for constructing positive-negative pairs, using past stochastic gradients has two benefits: lower computational costs and lower coding costs. First, avoiding computing the gradients twice per iteration save computation costs. Second, we may implement the proposed method inside optimizers, which can be employed in practice more easily.

\begin{algorithm}
   \caption{(Stochastic) Heavy Ball/Momentum}
   \label{algo:shb}
      $ m_{t} = \beta_{1} m_{t-1} + \beta_{3} g_{t}$\;  \\
      $ \theta_{t+1} = \theta_{t} -  \eta m_{t} $\;
\end{algorithm}

\begin{algorithm}
 \caption{(Stochastic) PNM}
 \label{algo:spnm}
      $ m_{t} = \beta_{1}^{2} m_{t-2} + (1-\beta_{1}^{2}) g_{t} $\; \\
      $ \theta_{t+1} = \theta_{t} - \frac{ \eta }{\sqrt{(1 +  \beta_{0} )^{2} + \beta_{0}^{2}}}  [ (1 +  \beta_{0} ) m_{t}  - \beta_{0} m_{t-1} ] $\; %
\end{algorithm}

\begin{algorithm}
 \caption{AdaPNM} 
 \label{algo:adapnm}
      $ m_{t} = \beta_{1}^{2} m_{t-2} + (1-\beta_{1}^{2}) g_{t} $\; \\
      $ \hat{m}_{t} =\frac{ (1 +  \beta_{0} ) m_{t}  - \beta_{0} m_{t-1} }{(1-\beta_{1}^{t})}   $\; \\
      $ v_{t} = \beta_{2}  v_{t-1} + (1 - \beta_{2} ) g_{t}^{2} $\; \\
      $v_{\mathrm{max}} = \max(v_{t}, v_{\mathrm{max}})$\;  \\
      $ \hat{v}_{t} = \frac{v_{\mathrm{max}}}{1-\beta_{2}^{t}} $\;  \\
      $ \theta_{t+1} = \theta_{t} -  \frac{\eta}{\sqrt{(1 +  \beta_{0} )^{2} + \beta_{0}^{2}}(\sqrt{\hat{v}_{t}} + \epsilon)} \hat{m}_{t} $\; 
\end{algorithm}

\textbf{Algorithms.}
We further incorporate PNM into SGD and Adam, and propose two novel PNM-based variants, including (Stochastic) PNM in Algorithm \ref{algo:spnm} and AdaPNM in Algorithm \ref{algo:adapnm}. Note that, by letting $\beta_{0} = - \frac{\beta_{1}}{1+\beta_{1}}$, we may recover conventional Momentum and Adam as the special cases of Algorithm \ref{algo:spnm} and Algorithm \ref{algo:adapnm}. Note that our paper uses the AMSGrad variant in Algorithm \ref{algo:adapnm} unless we specify it. Because \citet{reddi2019convergence} revealed that the AMSGrad variant to secure the convergence guarantee of adaptive gradient methods. We supplied AdaPNM without AMSGrad in Appendix.

\textbf{Normalizing the learning rate.} 
We particularly normalize the learning rate by the noise magnitude as $\frac{ \eta }{\sqrt{(1 +  \beta_{0} )^{2} + \beta_{0}^{2}}}$ in the proposed algorithms. The ratio of the noise magnitude to the learning rate matters to the convergence error (see Theorem \ref{pr:pnmconverge} below). In practice (not the long-time limit), it is important to achieve low convergence errors in the same epochs as SGD. Practically, we also observe in experiments that using the learning rate as $\frac{ \eta }{\sqrt{(1 +  \beta_{0} )^{2} + \beta_{0}^{2}}}$ for PNM can free us from re-tuning the hyperparameters, while we will need to re-fine-tune the learning rate without normalization, which is time-consuming in practice. Note that PNM can have a much larger ratio of the noise magnitude to learning rate than SGD.

\section{Convergence Analysis}
\label{sec:convergence}

In this section, we theoretically prove the convergence guarantee of Stochastic PNM. 

By Algorithm \ref{algo:spnm}, we may rewrite the main updating rules as 
\begin{align*}
\begin{cases}
      & m_{t} = \beta_{1}^{2} m_{t-2} + (1 - \beta_{1}^{2}) g_{t} , \\
      &(\theta_{t+1}  + \eta_{0}\beta_{0} m_{t}) = (\theta_{t} + \eta_{0}\beta_{0} m_{t-1}) - \eta_{0} m_{t},
 \end{cases}  
\end{align*}
where $ \eta_{0} =  \frac{ \eta }{\sqrt{(1 +  \beta_{0} )^{2} + \beta_{0}^{2}}}$. We denote that $\theta_{-2} = \theta_{-1} = \theta_{0}$, $\beta = \beta_{1}^{2}$, and $\alpha = \eta  \frac{ (1-\beta) }{\sqrt{(1 +  \beta_{0} )^{2} + \beta_{0}^{2}}} $. Then PNM can be written as
\begin{align}
\begin{cases}
      &x_{t} = \theta_{t} + \eta_{0} \beta_{0} m_{t-1}, \\
      &x_{t+1} = x_{t} - \alpha g_{t} + \beta (x_{t-1} - x_{t-2}).
 \end{cases}  
\end{align}
Note that $x_{t+1} - x_{t} = \alpha m_{t}$. We may also write SGD with Momentum as 
\begin{align}
      \theta_{t+1} = \theta_{t} - \eta g_{t} + \beta (\theta_{t} - \theta_{t-1}).
\end{align}
Obviously, PNM maintains two approximately independent momentum terms by using past odd-number-step gradients and even-number-step gradients, respectively. 

Inspired by \citet{yan2018unified}, we propose Theorem \ref{pr:pnmconverge} and prove that Stochastic PNM has a similar convergence rate to SGD with Momentum. The errors given by Stochastic PNM and SGD with Momentum are both $\mathcal{O}(\frac{1}{\sqrt{t}})$ \citep{yan2018unified}. We leave all proofs in Appendix \ref{sec:appproofs}.

 \begin{theorem}[Convergence of Stochastic Positive-Negative Momentum]
 \label{pr:pnmconverge}
Assume that $f(\theta)$ is a $L$-smooth function, $f$ is lower bounded as $f(\theta) \geq f^{\star} $, $\mathbb{E}[\xi ] = 0$, $\mathbb{E}[\| g(\theta, \xi) - \nabla f(\theta) \|^{2}] \leq \sigma^{2} $, and $\| \nabla f(\theta)  \| \leq G$ for any $\theta$. Let $\beta_{1} \in [0, 1)$, $\beta_{0} \geq 0$ and Stochastic PNM run for $t+1$ iterations. If $ \frac{\eta}{ \sqrt{(1+\beta_{0})^{2} + \beta_{0}^{2}}} = \min \{ \frac{1}{2L}, \frac{C}{\sqrt{t+1}} \}$, we have
\begin{align*}
 & \min_{k=0,\ldots, t} \mathbb{E}[ \| \nabla f(\theta_{k})\|^{2}]   \\
 \leq & \frac{ 2(f(\theta_{0}) -f^{\star})}{t+1} \max \{ 2L, \frac{\sqrt{t+1}}{C} \} +  \frac{C_{1}}{\sqrt{t+1}} ,
\end{align*}
where
\begin{align*}
C_{1} = C \frac{ L (\beta + \beta_{0} (1-\beta))^{2} (G^{2} + \sigma^{2}) + L(1-\beta)^{2} \sigma^{2}}{(1-\beta)^{2}}.
\end{align*}
\end{theorem}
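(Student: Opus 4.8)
The plan is to adapt the auxiliary-sequence technique of \citet{yan2018unified} to the two-step momentum recurrence $x_{t+1} = x_t - \alpha g_t + \beta(x_{t-1} - x_{t-2})$, with $\alpha = \eta_0(1-\beta)$ and $\eta_0 = \frac{\eta}{\sqrt{(1+\beta_0)^2+\beta_0^2}}$. The central object is an iterate sequence $y_t$ whose increments depend on the current stochastic gradient alone, so that the heavy-ball memory is absorbed and a standard descent argument applies. Because the PNM momentum lags by \emph{two} steps (it uses $x_{t-1}-x_{t-2}$, not $x_t-x_{t-1}$), the usual choice $y_t=x_t+\frac{\beta}{1-\beta}(x_t-x_{t-1})$ fails; instead I would take
\begin{align*}
y_t = \tfrac{1}{1-\beta}\,x_t - \tfrac{\beta}{1-\beta}\,x_{t-2},
\end{align*}
for which substituting the recurrence makes all $x_{t-1}-x_{t-2}$ terms cancel and yields the clean telescoping identity $y_{t+1}-y_t = -\eta_0 g_t$. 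Getting this auxiliary sequence right for the two-step delay is the first key step, and I expect it to be the main conceptual obstacle.

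Next I would invoke $L$-smoothness of $f$ at $y_t$, giving $f(y_{t+1}) \le f(y_t) - \eta_0 \langle \nabla f(y_t), g_t\rangle + \frac{L\eta_0^2}{2}\|g_t\|^2$, and take conditional expectation. Since $y_t$ is measurable with respect to the gradients up to step $t-1$ and $\mathbb{E}[g_t\mid\mathcal{F}_t]=\nabla f(\theta_t)$, the cross term becomes $\langle \nabla f(y_t),\nabla f(\theta_t)\rangle$ and $\mathbb{E}\|g_t\|^2 \le \mathbb{E}\|\nabla f(\theta_t)\|^2+\sigma^2$. The difficulty is that descent is measured at $y_t$ while the target quantity is $\|\nabla f(\theta_t)\|^2$, so I would split $\langle \nabla f(y_t),\nabla f(\theta_t)\rangle = \|\nabla f(\theta_t)\|^2 + \langle \nabla f(y_t)-\nabla f(\theta_t),\nabla f(\theta_t)\rangle$ and control the remainder by Young's inequality together with $\|\nabla f(y_t)-\nabla f(\theta_t)\|\le L\|y_t-\theta_t\|$, producing a lower bound of the form $\frac{1}{2}\|\nabla f(\theta_t)\|^2 - \frac{L^2}{2}\|y_t-\theta_t\|^2$.

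The heart of the argument is then bounding $\mathbb{E}\|y_t-\theta_t\|^2$. Using $\theta_t = x_t + \eta_0\beta_0 m_{t-1}$ and $x_t - x_{t-2} = -\eta_0(m_{t-1}+m_{t-2})$, the gap is the linear combination
\begin{align*}
y_t - \theta_t = -\eta_0\left[\left(\tfrac{\beta}{1-\beta}+\beta_0\right)m_{t-1} + \tfrac{\beta}{1-\beta}\,m_{t-2}\right],
\end{align*}
whose leading coefficient $\frac{\beta}{1-\beta}+\beta_0 = \frac{\beta+\beta_0(1-\beta)}{1-\beta}$ is exactly the source of the $(\beta+\beta_0(1-\beta))^2$ factor in $C_1$. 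Since each $m_s = \sum_{j}(1-\beta)\beta^j g_{s-2j}$ is a convex combination of stochastic gradients, Jensen's inequality gives $\mathbb{E}\|m_s\|^2 \le G^2+\sigma^2$, so $\mathbb{E}\|y_t-\theta_t\|^2 = O\!\left(\eta_0^2\,\frac{(\beta+\beta_0(1-\beta))^2}{(1-\beta)^2}(G^2+\sigma^2)\right)$.

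Finally I would assemble the per-step inequality $\frac{\eta_0}{2}\mathbb{E}\|\nabla f(\theta_t)\|^2 \le \mathbb{E} f(y_t)-\mathbb{E} f(y_{t+1}) + \frac{\eta_0 L^2}{2}\mathbb{E}\|y_t-\theta_t\|^2 + \frac{L\eta_0^2}{2}(\mathbb{E}\|\nabla f(\theta_t)\|^2+\sigma^2)$, sum over $t=0,\dots,t$, and telescope the $f(y_t)$ terms using $y_0=\theta_0$ and $f\ge f^\star$. The condition $\eta_0 \le \frac{1}{2L}$ lets me absorb the $\frac{L\eta_0^2}{2}\mathbb{E}\|\nabla f(\theta_t)\|^2$ term into the left-hand side (leaving a factor $\frac{\eta_0}{4}$) and, crucially, replace $\eta_0^2 L^2$ by $\frac{1}{2}\eta_0 L$, which turns the gap contribution from order $\eta_0^2$ into order $\eta_0$, i.e. the $\mathcal{O}(1/\sqrt{t+1})$ rate rather than $\mathcal{O}(1/(t+1))$. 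Dividing by $\frac{\eta_0(t+1)}{4}$, using $\min_k \le \frac{1}{t+1}\sum_k$, and substituting $\eta_0 = \min\{\frac{1}{2L},\frac{C}{\sqrt{t+1}}\}$ reproduces the two stated terms: the $L(1-\beta)^2\sigma^2/(1-\beta)^2$ piece of $C_1$ from the noise term $\frac{L\eta_0^2}{2}\sigma^2$, and the $L(\beta+\beta_0(1-\beta))^2(G^2+\sigma^2)/(1-\beta)^2$ piece from the gap bound. Beyond the auxiliary-sequence design, the delicate part is this constant-tracking that keeps the $(\beta+\beta_0(1-\beta))$ coefficient intact while still achieving the $1/\sqrt{t+1}$ rate.
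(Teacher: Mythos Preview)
Your proposal is correct and follows essentially the same route as the paper. The paper's auxiliary sequence $z_t=\frac{x_t-\beta x_{t-2}}{1-\beta}$ is exactly your $y_t$, and Lemmas~\ref{pr:convergencelm1}--\ref{pr:convergencelm3} carry out precisely the descent-at-$z_t$, inner-product splitting, and gap-bounding steps you outline; the only cosmetic difference is that the paper bounds $\|z_t-\theta_t\|$ by expanding $\theta_t-\theta_{t-2}$ as a geometric sum of past $g_k$'s rather than via your $m_{t-1},m_{t-2}$ expression, and it chooses the Young/Cauchy--Schwarz parameter as $L\eta_0$ (so the gap term carries the fixed prefactor $\tfrac{1}{2L}$) instead of your symmetric split, which is why your per-step coefficient comes out as $\tfrac{\eta_0}{4}$ rather than the paper's $Q\ge\tfrac{\eta_0}{2}$ and your final constants would be twice the stated ones unless you adjust that parameter.
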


\section{Generalization Analysis}
\label{sec:generalization}

In this section, we theoretically prove the generalization advantage of Stochastic PNM over SGD by using PAC-Bayesian framework \citep{mcallester1999some,mcallester1999pac}. We discover that, due to the stronger SGN, the posterior given by Stochastic PNM has a tighter generalization bound than the SGD posterior.

\subsection{The posterior analysis}

\citet{mcallester1999some} pointed that the posterior given by a training algorithm is closely related to the generalization bound. Thus, we first analyze the posteriors given by SGD, Momentum, and Stochastic Positive-Negative Momentum. 
\citet{mandt2017stochastic} studied the posterior given by the continuous-time dynamics of SGD. We present Assumptions \ref{as:taylor} in \citet{mandt2017stochastic}, which is true near minima. Note that $H(\theta)$ denotes the Hessian of the loss function $f$ at $\theta$.

\begin{assumption}[The second-order Taylor approximation]
 \label{as:taylor}
 The loss function around a minimum $\theta^{\star}$ can be approximately written as 
 \[ f(\theta) = f(\theta^{\star}) + \frac{1}{2}(\theta -\theta^{\star})^{\top} H(\theta^{\star}) (\theta -\theta^{\star}).\]
\end{assumption}

When the noisy gradient for each iteration is the unbiased estimator of the true gradient, the dynamics of gradient-based optimization can be always written as
\begin{align}
\label{eq:sgddiscrete}
\theta_{t+1} = \theta_{t} - \eta (\nabla f(\theta) + C(\theta) \xi),
\end{align}
where $C(\theta)$ is the covariance of gradient noise and $\xi$ obeys the standard Gaussian distribution $\mathcal{N}(0,I)$. While recent papers \citep{simsekli2019tail,xie2020diffusion} argued about the noise types, we follow most papers \citep{welling2011bayesian,jastrzkebski2017three,li2017stochastic,mandt2017stochastic,hu2019diffusion,xie2020diffusion} in this line and still approximate SGN as Gaussian noise due to Central Limit Theorem. The corresponding continuous-time dynamics can be written as 
\begin{align}
\label{eq:sgdcontinuous}
 d \theta =  - \nabla f(\theta) dt + [\eta C(\theta)]^{\frac{1}{2}} dW_{t},
 \end{align}
 where $dW_{t} = \mathcal{N}(0,I dt)$ is a Wiener process.

\citet{mandt2017stochastic} used SGD as an example and demonstrated that, the posterior generated by Equation \eqref{eq:sgdcontinuous} in the local region around a minimum $\theta^{\star}$ is a Gaussian distribution $\mathcal{N}(\theta^{\star},\Sigma_{sgd})$. This well-known result can be formulated as Theorem \ref{pr:sgdposterior}. We denote that $H(\theta^{\star}) = H$ and $C(\theta^{\star}) = C$ in the following analysis.

 \begin{theorem}[The posterior generated by Equation \eqref{eq:sgdcontinuous} near a minimum \citep{mandt2017stochastic}]
 \label{pr:sgdposterior}
Suppose that Assumption \ref{as:taylor} holds, the covariance near the minimum $\theta^{\star}$ is $C(\theta^{\star})$, the dynamics is governed by Equation \eqref{eq:sgdcontinuous}. Then, in the long-time limit, the generated posterior $Q$ near $\theta^{\star}$ is a Gaussian distribution $\mathcal{N}(\theta^{\star},\Sigma)$. and the $\Sigma$ satisfies 
 \[ \Sigma H(\theta^{\star}) + H(\theta^{\star}) \Sigma = \eta C(\theta^{\star}) . \]
\end{theorem}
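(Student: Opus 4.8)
The plan is to observe that, under Assumption \ref{as:taylor}, the continuous-time dynamics \eqref{eq:sgdcontinuous} near $\theta^{\star}$ reduces to an Ornstein--Uhlenbeck (OU) process, whose stationary law is Gaussian with a covariance governed by a Lyapunov equation. First I would insert the quadratic approximation of $f$ into \eqref{eq:sgdcontinuous}. Writing $H = H(\theta^{\star})$ and $C = C(\theta^{\star})$, Assumption \ref{as:taylor} gives $\nabla f(\theta) = H(\theta - \theta^{\star})$ in the local region, so the shift $u = \theta - \theta^{\star}$ turns \eqref{eq:sgdcontinuous} into the linear SDE
\begin{align*}
du = - H u \, dt + (\eta C)^{\frac{1}{2}} dW_{t}.
\end{align*}

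Next I would establish the Gaussian form of the posterior. Since the drift is linear and the diffusion coefficient is constant, the solution $u_{t}$ is an affine functional of the driving Wiener process and is therefore Gaussian for every $t$; in particular its long-time limit $Q$ is Gaussian. The mean obeys $\mathbb{E}[u_{t}] = e^{-Ht} u_{0}$, which tends to $0$ because $H$ is positive definite near a strict minimum, so the stationary mean is $\theta^{\star}$ and $Q = \mathcal{N}(\theta^{\star}, \Sigma)$ for some $\Sigma$. To pin down $\Sigma$, I would apply It\^o's formula to the entries $u_{i} u_{j}$, whose quadratic-variation contribution is $(\eta C)_{ij}\, dt$, and take expectations. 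Setting $M(t) = \mathbb{E}[u_{t} u_{t}^{\top}]$, this yields the matrix ODE
\begin{align*}
\frac{d}{dt} M(t) = - H M(t) - M(t) H^{\top} + \eta C.
\end{align*}
In the long-time limit $M(t) \to \Sigma$ and the left-hand side vanishes, so $\Sigma$ satisfies $H \Sigma + \Sigma H^{\top} = \eta C$; since $H$ is a Hessian and thus symmetric, this is exactly $\Sigma H(\theta^{\star}) + H(\theta^{\star}) \Sigma = \eta C(\theta^{\star})$, as claimed.

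The algebra above is routine; the delicate points are justifying that the long-time limit is genuinely Gaussian and stationary, and that a unique covariance exists. Both rest on $H$ being positive definite in the local region around $\theta^{\star}$, which simultaneously guarantees $e^{-Ht} \to 0$, convergence of $M(t)$, and a unique positive-definite solution of the Lyapunov equation. I expect this positivity/convergence bookkeeping to be the main obstacle rather than any computation, and since the statement restates the known result of \citet{mandt2017stochastic}, I would import the existence-and-convergence guarantees from the standard OU-process theory rather than re-derive them from scratch.
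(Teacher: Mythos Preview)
Your argument is correct and is the standard Ornstein--Uhlenbeck derivation: linearize the drift under Assumption~\ref{as:taylor}, recognize the resulting linear SDE, and read off the stationary Gaussian law via the Lyapunov equation for the covariance. There is nothing to fault in the logic.

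However, note that the paper does not actually prove this statement. Theorem~\ref{pr:sgdposterior} is quoted verbatim as a known result from \citet{mandt2017stochastic}, and the appendix contains proofs only for Theorem~\ref{pr:pnmconverge} and Theorem~\ref{pr:pnmgeneralization}. So there is no ``paper's own proof'' to compare against; the paper simply imports the result. Your derivation is precisely the argument \citet{mandt2017stochastic} give, so in that sense you have reproduced the cited source rather than diverged from it.
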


\textbf{The vanilla SGD posterior.} In the case of SGD, based on \citet{jastrzkebski2017three,zhu2019anisotropic,xie2020diffusion}, the covariance $C(\theta)$ is proportional to the Hessian $H(\theta)$ and inverse to the batch size $B$ near minima: 
\begin{align}
\label{eq:covarhessian}
C_{sgd}(\theta) \approx  \frac{1}{B} H(\theta).
\end{align}
Equation \eqref{eq:covarhessian} has been theoretically and empirically studied by related papers \citep{xie2020diffusion,xie2020adai}. Please see Appendix \ref{sec:noiseanalysis} for the details.

By Equation \eqref{eq:covarhessian} and Theorem \ref{pr:sgdposterior}, we may further express $\Sigma_{sgd}$ as 
\begin{align}
\label{eq:sgdsigma}
\Sigma_{sgd} = \frac{\eta}{2B} I,
\end{align}
where $I$ is the $n \times n$ identity matrix and $n$ is the number of model parameters.

\textbf{The Momentum posterior.} By Equation \eqref{eq:hbnoise}, we know that, in continuous-time dynamics, we may write the noise covariance in HB/SGD with Momentum as
\begin{align}
C_{\mathrm{hb}}(\theta) = \frac{\beta_{3}}{1-\beta_{1}} C_{sgd}  = C_{sgd}.
\end{align}
Without loss of generality, we have assumed that $\beta_{3} = 1- \beta_{1}$ in HB/SGD with Momentum. Then, in the long-time limit, we further express $\Sigma_{\mathrm{hb}}$ as 
\begin{align}
\Sigma_{\mathrm{hb}} = \frac{\beta_{3}}{1-\beta_{1}} \Sigma_{sgd}  = \frac{\eta}{2B} I.
\end{align}
This result has also been obtained by \citet{mandt2017stochastic}. Thus, the Momentum posterior is approximately equivalent to the vanilla SGD posterior. In the PAC-Bayesian framework (Theorem \ref{pr:pac}), HB should generalize as well as SGD in the long-time limit.

\textbf{The Stochastic PNM posterior.}  Similarly, by Equation \eqref{eq:pnmnoise}, we know that, in continuous-time dynamics, we may write the noise covariance in Stochastic PNM as
\begin{align}
C_{\mathrm{pnm}}(\theta) = [(1+\beta_{0})^{2} + \beta_{0}^{2}] C_{sgd}.
\end{align}
where $C_{\mathrm{pnm}}(\theta) = ((1+\beta_{0})^{2} + \beta_{0}^{2}) C_{sgd}(\theta)$ is the covariance of SGN in Stochastic PNM.

By Equation \eqref{eq:covarhessian} and Theorem \ref{pr:sgdposterior}, in the long-time limit, we may further express $\Sigma_{\mathrm{pnm}}$ as 
\begin{align}
\label{eq:pnmsigma}
\Sigma_{\mathrm{pnm}} =  [(1+\beta_{0})^{2} + \beta_{0}^{2}]  \frac{\eta}{2B} I.
\end{align}
Thus, we may use the hyperparameter $\beta_{0}$ to rescale the covariance of the posterior. In the following analysis, we will prove that the PAC-Bayesian bound may heavily depend on the new hyperparameter $\beta_{0}$. 

\subsection{The PAC-Bayesian bound analysis}

\textbf{The PAC-Bayesian generalization bound.} The PAC-Bayesian framework provides guarantees on the expected risk of a randomized predictor (hypothesis) that depends on the training dataset. The hypothesis is drawn from a distribution $Q$ and sometimes referred to as a posterior given a training algorithm. We then denote the expected risk with respect to the distribution $Q$ as $R(Q)$ and the empirical risk with respect to the distribution $Q$ as $\hat{R}(Q)$. Note that $P$ is typically assumed to be a Gaussian prior, $\mathcal{N}(0, \lambda^{-1}I)$, over the weight space $\Theta$, where $\lambda$ is the $L_{2}$ regularization strength \citep{graves2011practical,neyshabur2017exploring,he2019control}. 

\begin{assumption}
 \label{as:gaussprior}
The prior over model weights is Gaussian, $P = \mathcal{N}(0, \lambda^{-1}I)$.
\end{assumption}

We introduce the classical PAC-Bayesian generalization bound in Theorem \ref{pr:pac}.

 \begin{theorem}[The PAC-Bayesian Generalization Bound \citep{mcallester1999some}]
 \label{pr:pac}
For any real $\Delta \in (0,1)$, with probability at least $1-\Delta$, over the draw of the training dataset $S$, the expected risk for all distributions $Q$ satisfies
\[  R(Q) - \hat{R}(Q) \leq 4\sqrt{\frac{1}{N} [\KL(Q \| P) + \ln(\frac{2N}{\Delta})]}   , \]
where $\KL(Q \| P)$ denotes the Kullback–Leibler divergence from $P$ to $Q$. 
\end{theorem}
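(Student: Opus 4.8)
This is the classical McAllester PAC-Bayesian bound, so I would follow the standard route built on a single-hypothesis concentration estimate, a change-of-measure step, and a final inversion. Throughout I assume the loss takes values in $[0,1]$ and write $R(\theta), \hat R(\theta)$ for the expected and empirical risks of a fixed hypothesis, so that $R(Q) = \mathbb{E}_{\theta\sim Q}[R(\theta)]$ and $\hat R(Q) = \mathbb{E}_{\theta\sim Q}[\hat R(\theta)]$. The first step is a moment bound for one fixed $\theta$: since $\hat R(\theta)$ averages $N$ independent $[0,1]$-valued terms of mean $R(\theta)$, a Chernoff estimate on the binomial tail gives $\mathbb{E}_{S}[ e^{N\,\mathrm{kl}(\hat R(\theta)\,\|\,R(\theta))} ] \le 2\sqrt{N}$, where $\mathrm{kl}(q\|p) = q\ln\frac{q}{p} + (1-q)\ln\frac{1-q}{1-p}$ is the binary relative entropy. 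Using $\mathrm{kl}$ rather than a raw Hoeffding exponent is deliberate: it scales the exponent to the natural temperature and lets me avoid an auxiliary union bound later.

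The heart of the argument is the change of measure. For any measurable $\phi$ and any $Q$, the Donsker–Varadhan inequality gives $\mathbb{E}_{\theta\sim Q}[\phi(\theta)] \le \KL(Q\|P) + \ln \mathbb{E}_{\theta\sim P}[ e^{\phi(\theta)} ]$, which I would apply with $\phi(\theta) = N\,\mathrm{kl}(\hat R(\theta)\|R(\theta))$. The crucial observation is that the resulting upper bound $\ln \mathbb{E}_{\theta\sim P}[ e^{N\,\mathrm{kl}} ]$ does not depend on $Q$; this is precisely what makes the conclusion hold simultaneously for all $Q$. Hence the only stochastic quantity needing a high-probability argument is the prior-averaged moment $\mathbb{E}_{\theta\sim P}[ e^{N\,\mathrm{kl}} ]$, which is the same random variable whatever posterior we later substitute. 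By Fubini its expectation over $S$ is $\mathbb{E}_{\theta\sim P}\mathbb{E}_{S}[ e^{N\,\mathrm{kl}} ] \le 2\sqrt N$, so Markov's inequality yields, with probability at least $1-\Delta$ over the draw of $S$, the uniform-in-$Q$ bound $N\,\mathbb{E}_{\theta\sim Q}[\mathrm{kl}(\hat R(\theta)\|R(\theta))] \le \KL(Q\|P) + \ln\frac{2\sqrt N}{\Delta}$.

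Two convexity facts finish the argument. Jensen's inequality for the convex map $\mathrm{kl}$ gives $\mathrm{kl}(\hat R(Q)\,\|\,R(Q)) \le \mathbb{E}_{\theta\sim Q}[\mathrm{kl}(\hat R(\theta)\|R(\theta))]$, upgrading the previous display to a statement about the averaged risks. I would then invert via Pinsker's inequality $2(R(Q)-\hat R(Q))^2 \le \mathrm{kl}(\hat R(Q)\|R(Q))$ to pass from relative-entropy form to the square-root form $R(Q)-\hat R(Q) \le \sqrt{\tfrac{1}{2N}[\KL(Q\|P)+\ln(2\sqrt N/\Delta)]}$. The factor $4$ and the $\ln(2N/\Delta)$ term in the stated bound are then recovered by substituting the cruder constants used in McAllester's original derivation in place of these tighter estimates.

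I expect the main obstacle to be conceptual rather than computational: handling the ``for all $Q$'' quantifier honestly. One cannot union-bound over the uncountable family of posteriors, and optimizing an inverse temperature after seeing $S$ would be illegitimate. The resolution is the recognition that the change-of-measure inequality holds deterministically in $S$ and for every $Q$ at once, so only the single $Q$-free random variable $\mathbb{E}_{\theta\sim P}[e^{N\,\mathrm{kl}}]$ requires concentration; choosing the $\mathrm{kl}$-scaled exponent rather than a Hoeffding one is exactly what removes the need for any temperature optimization.
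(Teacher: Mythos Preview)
The paper does not prove this theorem: it is stated as a classical result, attributed to \citet{mcallester1999some}, and then used as a black box in the proof of Theorem~\ref{pr:pnmgeneralization}. There is therefore no in-paper argument against which to compare your proposal.

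That said, your proposal is the standard modern route to the PAC-Bayes bound (the Seeger--Maurer argument via the binary-$\mathrm{kl}$ moment inequality, Donsker--Varadhan change of measure, Markov, Jensen for the jointly convex $\mathrm{kl}$, and Pinsker), and it is correct. The only loose point is your last sentence: your derivation actually produces the sharper constants $\sqrt{1/(2N)}$ and $\ln(2\sqrt{N}/\Delta)$, and you do not so much ``recover'' the stated $4$ and $\ln(2N/\Delta)$ by using cruder estimates as simply observe that your bound implies the weaker one, since $\sqrt{1/(2N)} \le 4/\sqrt{N}$ and $\ln(2\sqrt{N}/\Delta) \le \ln(2N/\Delta)$ for $N\ge 1$. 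McAllester's original 1999 argument did not proceed through the binary-$\mathrm{kl}$ exponential inequality and Pinsker, which is why his constants are looser; what you have written is the later refinement rather than a reconstruction of his proof, but it certainly establishes the stated inequality.
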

We define that $\Gen(Q) = R(Q) - \hat{R}(Q)$ is the expected generalization gap. Then the upper bound of $\Gen(Q)$ can be written as
\begin{align}
\label{eq:pacsup}
\Sup \Gen(Q) = 4\sqrt{\frac{1}{N} [\KL(Q \| P) + \ln(\frac{2N}{\Delta})]} .
\end{align}
Theorem \ref{pr:pac} demonstrates that the expected generalization gap's upper bound closely depends on the posterior given by a training algorithm. Thus, it is possible to improve generalization by decreasing $\KL(Q \| P)$. 

\textbf{The Kullback–Leibler divergence from $P$ to $Q$.} Note that $Q$ and $P$ are two Gaussian distributions, $\mathcal{N}(\mu_{Q}, \Sigma_{Q})$ and $\mathcal{N}(\mu_{P}, \Sigma_{P})$, respectively. Then we have  
\begin{align}
\label{eq:pqkldiv}
 \KL(Q \| P) 
=  & \frac{1}{2} \left[ \log \frac{\det(\Sigma_{P})}{\det(\Sigma_{Q})}  + \Tr (\Sigma_{P}^{-1} \Sigma_{Q})  \right]  + \nonumber \\ 
  & \frac{1}{2}  (\mu_{Q} - \mu_{P})^{\top} \Sigma_{P}^{-1} (\mu_{Q} - \mu_{P}) - \frac{n}{2}.
\end{align}
Suppose that $\mu_{Q} = \theta^{\star}$ and $\Sigma_{Q}(\gamma) = \gamma \Sigma_{sgd} $ correspond to the covariance-rescaled SGD posterior, and $\mu_{P} = 0$ and $\Sigma_{P} = \lambda^{-1} I$ correspond to $L_{2}$ regularization. Here $\gamma = ((1+\beta_{0})^{2} + \beta_{0}^{2}) \geq 1$. Then $ \KL(Q(\gamma) \| P) $ is a function of $\gamma$, written as 
\begin{align}
\label{eq:priorposteriorkldiv}
 \KL(Q(\gamma) \| P) 
=  & \frac{1}{2}\log \frac{\lambda^{-n}}{\gamma^{n}\det(\Sigma_{sgd})}  +   \frac{1}{2}\lambda^{-1}\gamma \Tr (\Sigma_{sgd})  \nonumber \\ 
    & + \frac{\lambda}{2} \| \theta^{\star} \|^{2} - \frac{n}{2}.
\end{align}
For minimizing $\KL(Q(\gamma) \| P)$, we calculate its gradient with respect to $\gamma$ as
\begin{align}
\label{eq:kldivgrad}
\nabla_{\gamma} \KL(Q(\gamma) \| P) =  \frac{n}{2} (\frac{\eta}{2B\lambda} - \frac{1}{\gamma}),
\end{align}
where we have used Equation \eqref{eq:sgdsigma}. 

It shows that $\KL(Q(\gamma) \| P)$ is a monotonically decreasing function on the interval of $\gamma \in [1, \frac{2B\lambda}{\eta}]$. 
Obviously, when  $\frac{\eta}{2B\lambda}<1$ holds, we can always decrease $\KL(Q(\gamma) \| P)$ by fine-tuning $\gamma > 1$. Note that Momentum is a special case of PNM with $\beta_{0} = - \frac{\beta_{1}}{1+\beta_{1}}$, and the Momentum posterior is approximately equivalent to the vanilla SGD posterior.

\textbf{Stochastic PNM may have a tighter bound than SGD.}  Based on the results above, we naturally prove Theorem \ref{pr:pnmgeneralization} that Stochastic PNM can always have a tighter upper bound of the generalization gap than SGD by fine-tuning $\beta_{0} > 0$ in any task where $\frac{\eta}{2B\lambda}<1$ holds for SGD. In principle, SHB may be reduced to SGD with a different learning rate. SGD in PyTorch is actually equivalent to SHB. Thus, our theoretical analysis can be easily generalized to SHB.

 \begin{theorem}[The generalization advantage of Stochastic PNM]
 \label{pr:pnmgeneralization}
Suppose Assumption \ref{as:gaussprior}, the conditions of Theorem \ref{pr:sgdposterior}, and Theorem \ref{pr:pac} hold. If $\frac{\eta}{2B\lambda}<1$ holds for SGD, then there must exist $\beta_{0} \in (0, \frac{2B\lambda}{\eta}]$ that makes the following hold in the long-time limit:
\begin{align*}
\Sup \Gen(Q_{\mathrm{pnm}}) < \Sup \Gen(Q_{sgd}) .
\end{align*}
\end{theorem}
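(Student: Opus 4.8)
The plan is to reduce the statement to a comparison of Kullback--Leibler divergences and then exploit the monotonicity already encoded in Equation \eqref{eq:kldivgrad}. First I would observe that the bound in Equation \eqref{eq:pacsup}, namely $\Sup\Gen(Q) = 4\sqrt{\frac{1}{N}[\KL(Q\|P) + \ln(\frac{2N}{\Delta})]}$, is a strictly increasing function of $\KL(Q\|P)$: the quantities $N$ and $\ln(\frac{2N}{\Delta})$ are fixed positive constants (since $\Delta\in(0,1)$ and $N\geq 1$ force $\frac{2N}{\Delta}>1$), and $\KL\geq 0$, so the radicand is strictly positive and $s\mapsto 4\sqrt{(s+c)/N}$ is strictly increasing for $s\geq 0$. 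Hence it suffices to exhibit a single $\beta_{0}>0$ for which $\KL(Q_{\mathrm{pnm}}\|P) < \KL(Q_{sgd}\|P)$.

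Next I would treat $\KL(Q(\gamma)\|P)$ from Equation \eqref{eq:priorposteriorkldiv} as a one-variable function of $\gamma=(1+\beta_{0})^{2}+\beta_{0}^{2}$, recalling that SGD corresponds to $\gamma=1$ (that is, $\beta_{0}=0$) while Stochastic PNM with $\beta_{0}>0$ corresponds to $\gamma>1$. Equation \eqref{eq:kldivgrad} supplies $\nabla_{\gamma}\KL(Q(\gamma)\|P) = \frac{n}{2}\left(\frac{\eta}{2B\lambda}-\frac{1}{\gamma}\right)$, which under the hypothesis $\frac{\eta}{2B\lambda}<1$ is strictly negative for every $\gamma\in[1,\frac{2B\lambda}{\eta})$. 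Therefore $\KL(Q(\gamma)\|P)$ is strictly decreasing on the interval $[1,\frac{2B\lambda}{\eta}]$, which is nondegenerate precisely because $\frac{2B\lambda}{\eta}>1$.

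Finally I would connect the hyperparameter to $\gamma$. The map $\beta_{0}\mapsto\gamma(\beta_{0})=1+2\beta_{0}+2\beta_{0}^{2}$ is continuous and strictly increasing with $\gamma(0)=1$ and $\gamma(\beta_{0})\to\infty$; by the intermediate value theorem there is a small $\beta_{0}>0$ with $\gamma(\beta_{0})\in(1,\frac{2B\lambda}{\eta})$. Because $\gamma(\beta_{0})-\beta_{0}=1+\beta_{0}+2\beta_{0}^{2}>0$ for all $\beta_{0}\geq 0$, this choice automatically yields $\beta_{0}<\gamma(\beta_{0})<\frac{2B\lambda}{\eta}$, so $\beta_{0}\in(0,\frac{2B\lambda}{\eta}]$ as required by the statement. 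Strict monotone decrease of $\KL$ then gives $\KL(Q_{\mathrm{pnm}}\|P)=\KL(Q(\gamma(\beta_{0}))\|P)<\KL(Q(1)\|P)=\KL(Q_{sgd}\|P)$, and feeding this through the strictly increasing bound from the first step produces $\Sup\Gen(Q_{\mathrm{pnm}})<\Sup\Gen(Q_{sgd})$.

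Since every analytical ingredient (the posterior covariances $\Sigma_{sgd}$ and $\Sigma_{\mathrm{pnm}}$, the closed-form $\KL$, and its $\gamma$-derivative) is already established in the preceding subsection, I do not expect a genuine computational obstacle. The only points demanding care are threading \emph{strictness} through each link of the chain --- the radical, the $\gamma$-monotonicity, and the $\beta_{0}\mapsto\gamma$ reparametrization --- and confirming that the admissible window $(1,\frac{2B\lambda}{\eta})$ is nonempty and pulls back to a legitimate $\beta_{0}$; both facts follow at once from the single hypothesis $\frac{\eta}{2B\lambda}<1$.
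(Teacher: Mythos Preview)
Your proposal is correct and follows essentially the same approach as the paper: both reduce the comparison of $\Sup\Gen$ to the monotonicity of the bound in $\gamma$ via Equation \eqref{eq:kldivgrad}. The only cosmetic difference is that the paper differentiates $B(\gamma)=4\sqrt{\frac{1}{N}[\KL(Q(\gamma)\|P)+\ln(\frac{2N}{\Delta})]}$ directly by the chain rule and shows $\nabla_{\gamma}B(\gamma)<0$ on $[1,\frac{2B\lambda}{\eta}]$, whereas you first factor out the strict monotonicity of $s\mapsto 4\sqrt{(s+c)/N}$ and then study $\KL(Q(\gamma)\|P)$ alone; your treatment of the reparametrization $\beta_{0}\mapsto\gamma(\beta_{0})$ and the inclusion $\beta_{0}\in(0,\frac{2B\lambda}{\eta}]$ is in fact more explicit than the paper's.
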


\textbf{When does $\frac{\eta}{2B\lambda} < 1$ hold?} We do not theoretically prove that $\frac{\eta}{2B\lambda} < 1$ always holds. However, it is very easy to empirically verify the inequality $\frac{\eta}{2B\lambda} <1$ in any specific practical task. Fortunately, we find that $\frac{\eta}{2B\lambda} < 1$ holds in wide practical applications. For example, we have $\frac{\eta}{2B\lambda} \approx 0.039 $ for the common setting that $\eta=0.001$ (after the final learning rate decay), $B=128$, and $\lambda=1e-4$. It means that the proposed PNM method may improve generalization in wide applications. 

\textbf{How to select $\beta_{0}$ in practice?} Recent work \citep{he2019control} suggested that increasing $\frac{\eta}{B}$ always improves generalization by using the PAC-Bayesian framework similarly to our work. However, as we discussed in Section \ref{sec:intro}, this is not true in practice for multiple reasons. 

Similarly, while Equation \eqref{eq:kldivgrad} suggests that $\gamma = \frac{2B\lambda}{\eta}$ can minimize $\Sup \Gen(Q_{\mathrm{pnm}})$ in principle, we do not think that $\gamma = \frac{2B\lambda}{\eta}$ should always be the optimal setting in practice. Instead, we suggest that a $\gamma$ slightly larger than one is good enough in practice. In our paper, we choose $\gamma=5$ as the default setting, which corresponds to $\beta_{0} = 1$. 

We do not choose $\gamma = \frac{2B\lambda}{\eta}$ mainly because a too large $\gamma$ requires too many iterations to reach the long-time limit. Theorem \ref{pr:pnmconverge} also demonstrates that it will require much more iterations to reach convergence if $\beta_{0}$ is too large. However, in practice, the number of training epochs is usually fixed and is not consider as a fine-tuned hyperparameter. This supports our belief that PNM with $\gamma$ slighter larger than one can be a good and robust default setting without re-tuning the hyperparameters.

\section{Empirical Analysis}
\label{sec:empirical}

\begin{figure}
\center
\subfigure{\includegraphics[width =0.49\columnwidth ]{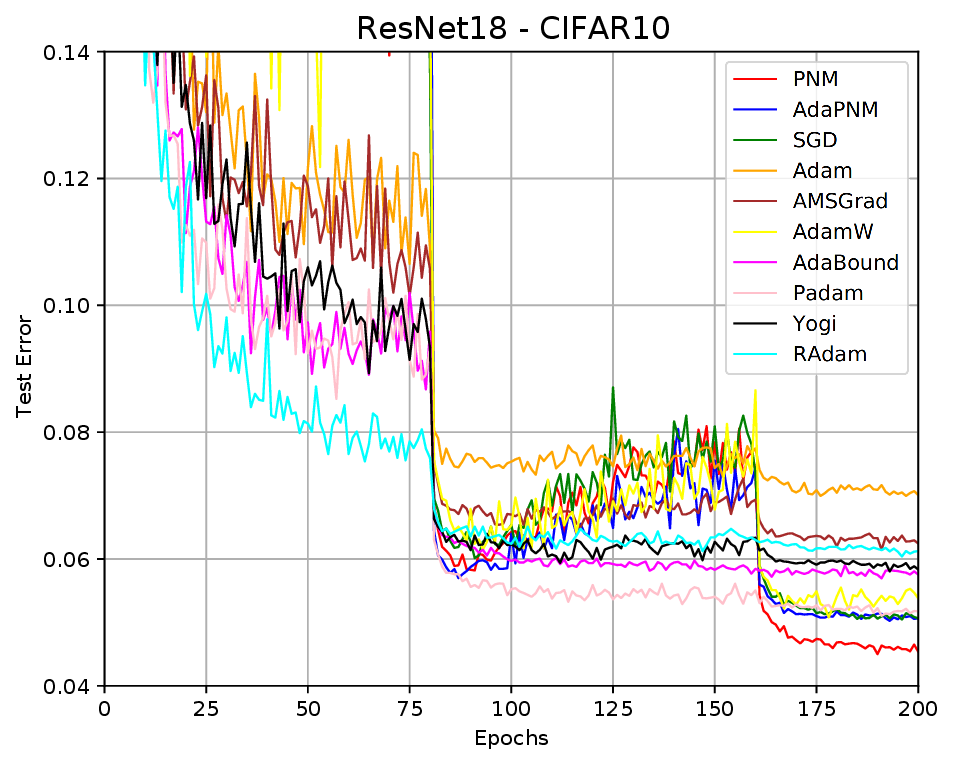}} 
\subfigure{\includegraphics[width =0.49\columnwidth ]{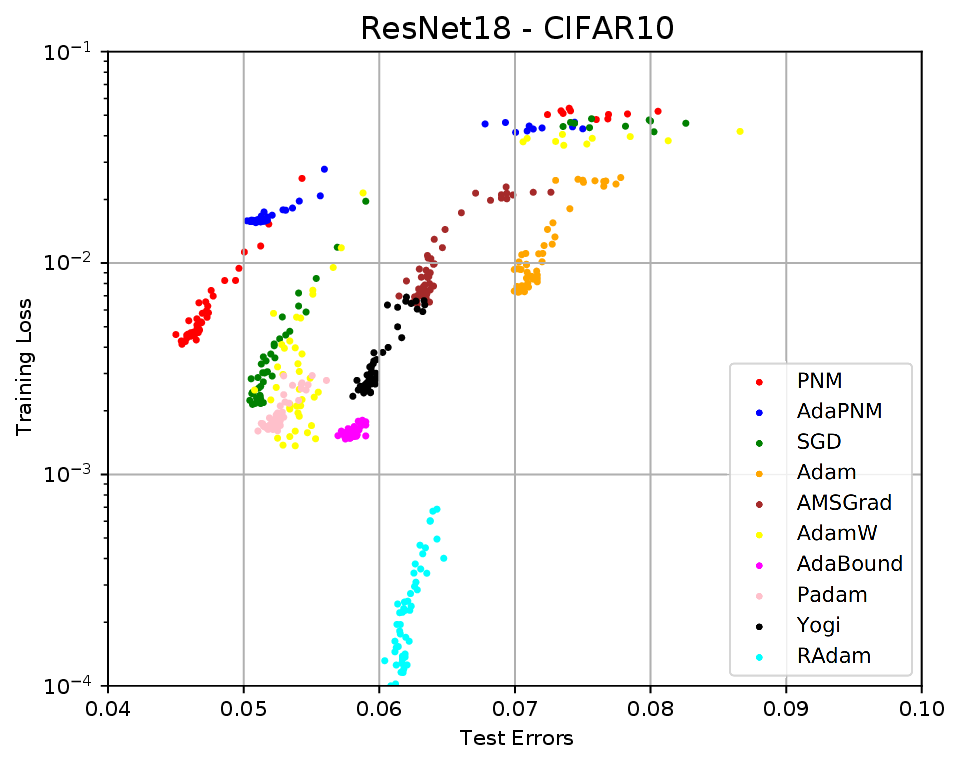}}
\subfigure{\includegraphics[width =0.49\columnwidth ]{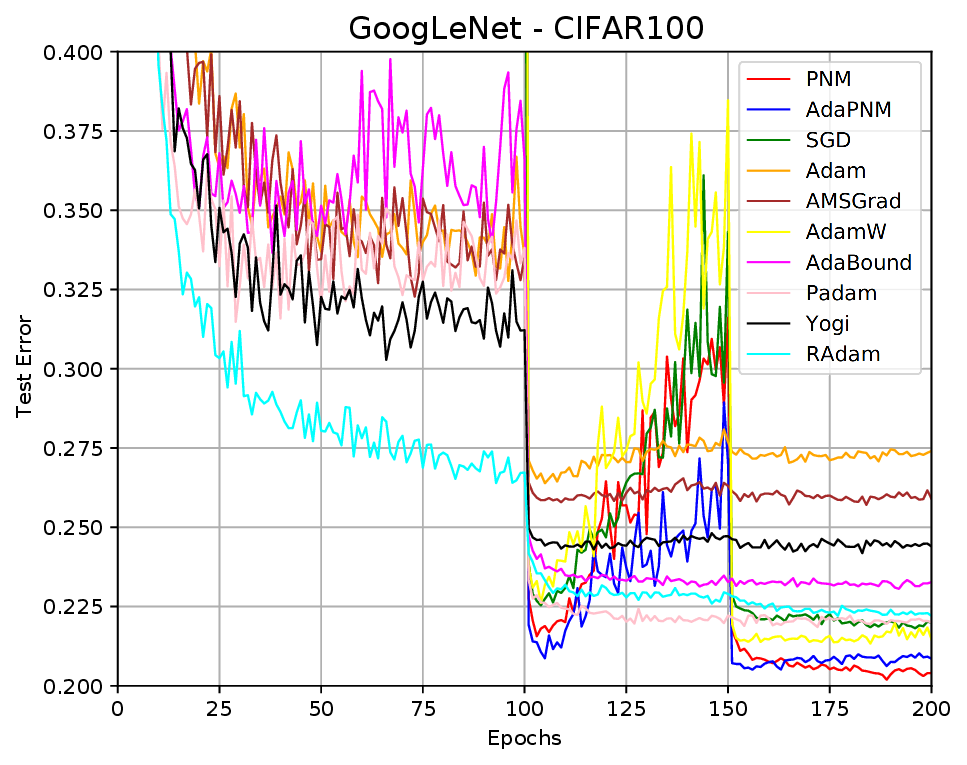}}  
\subfigure{\includegraphics[width =0.49\columnwidth ]{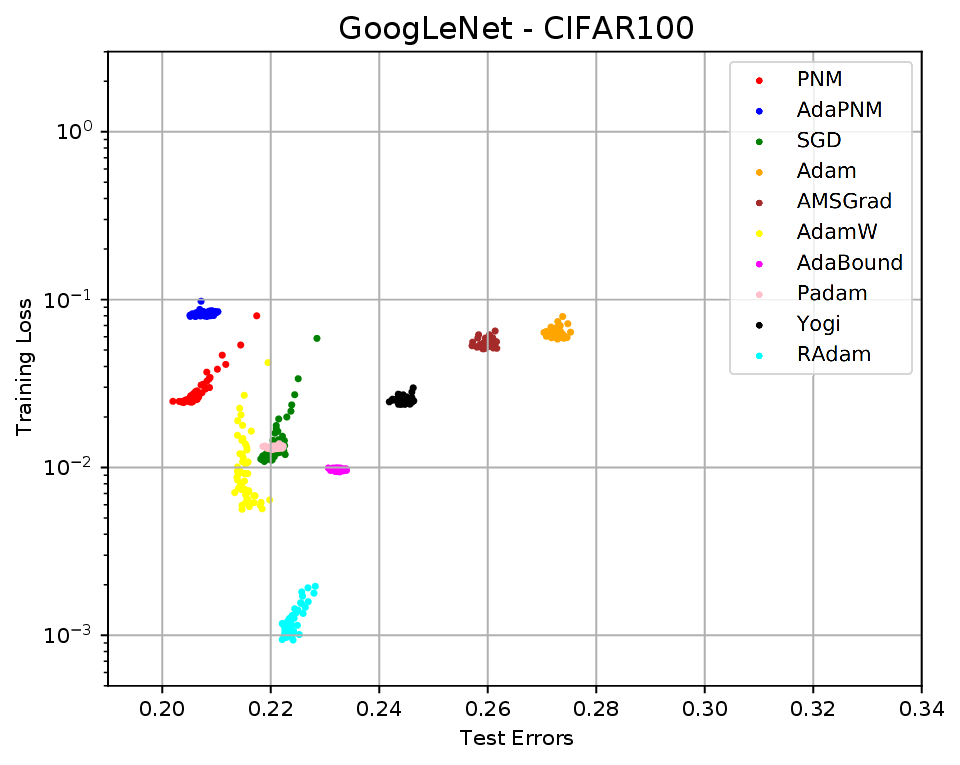}}  
\caption{ The learning curves of popular models on CIFAR-10 and CIFAR-100, respectively. Left Column: Test curves. Right Column: The scatter plots of training losses and test errors during final 40 epochs. It demonstrates that PNM and AdaPNM yield significantly better test results.}
 \label{fig:cifarpnm}
\end{figure}

In this section, we empirically study how the PNM-based optimizers are compared with conventional optimizers. 

\textbf{Models and Datasets.} We trained popular deep models, including ResNet18/ResNet34/ResNet50 \citep{he2016deep}, VGG16 \citep{simonyan2014very}, DenseNet121 \citep{huang2017densely}, GoogLeNet \citep{szegedy2015going}, and Long Short-Term Memory (LSTM) \citep{hochreiter1997long} on CIFAR-10/CIFAR-100 \citep{krizhevsky2009learning}, ImageNet \citep{deng2009imagenet} and Penn TreeBank \citep{marcus1993building}. We leave the implementation details in Appendix \ref{sec:appexperiment}.

\begin{figure}
\centering
\subfigure{\includegraphics[width =0.49\columnwidth ]{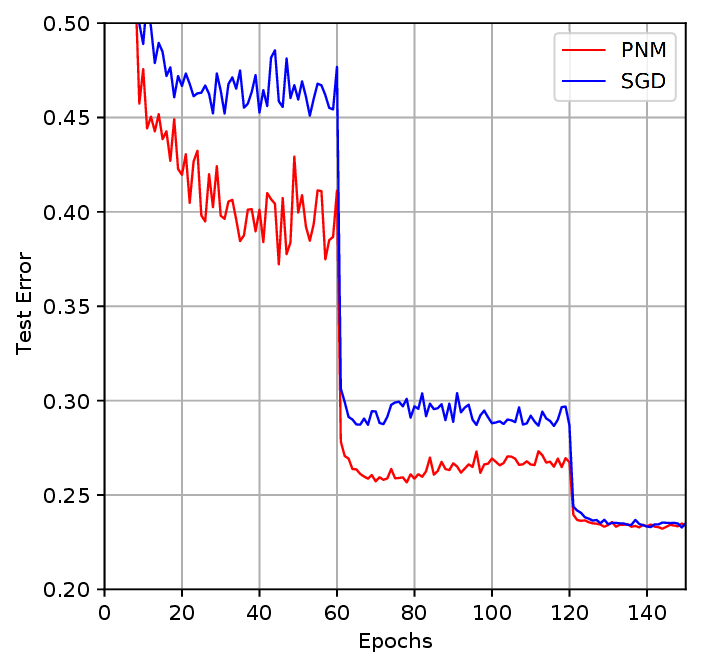}}
\subfigure{\includegraphics[width =0.49\columnwidth ]{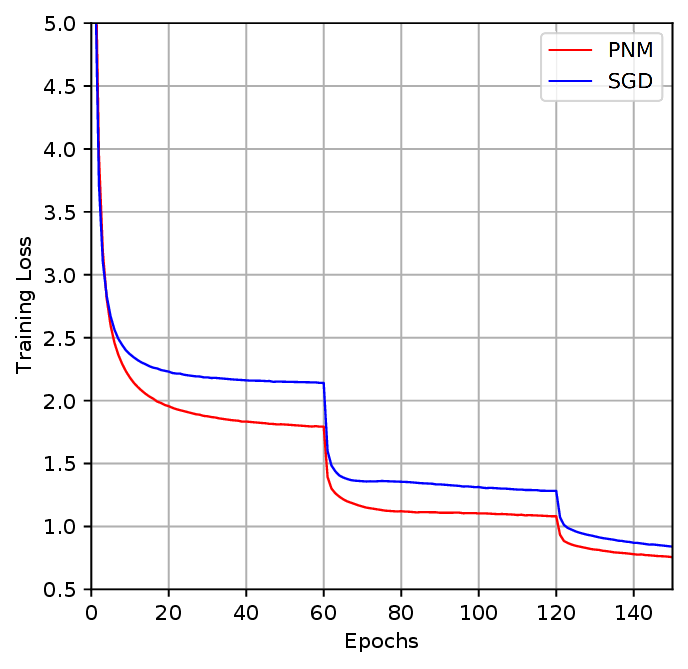}}  
\begin{tabular}{lllll}
\toprule
           & Epoch60 & Epoch120 & Epoch150    \\
\midrule
PNM(Top1)  &  $\mathbf{37.23}$ & $\mathbf{25.67}$  & $\mathbf{23.21}$  \\
SGD(Top1)  & $45.10$ & $28.66$ & $23.28$  \\
\midrule
PNM(Top5)  &  $\mathbf{14.53}$ & $\mathbf{7.79}$  & $\mathbf{6.73}$  \\
SGD(Top5)  & $19.04$ & $9.30$ & $6.75$  \\
\midrule
\bottomrule
\end{tabular}
\caption{ The learning curves of ResNet50 on ImageNet. Left Subfigure: Top 1 Test Error. Right Subfigure: Training Loss. PNM not only generalizes significantly better than conventional momentum, but also converges faster. PNM always achieves lower training losses and test errors at the final epoch of each learning rate decay phase.}
 \label{fig:imagenetpnm}
\end{figure}

 \begin{table}
\caption{Top-1 and top-5 test errors of ResNet50 on ImageNet. Note that the popular SGD baseline performace of ResNet50 on ImageNet has the test errors as $23.85\%$ in PyTorch and $24.9\%$ in \citet{he2016deep}, which are both worse than our SGD baseline. AdaPNM (with decoupled weight decay and no amsgrad) significantly outperforms its conventional variant, Adam (with decoupled weight decay and no amsgrad).}
\label{table:imagenet}
\begin{center}
\begin{small}
\begin{sc}
\begin{tabular}{lll | ll}
\toprule
           & PNM & SGD & AdaPNM & AdamW   \\
\midrule
Top1  &  $\mathbf{23.21}$ & $23.28$  &  $\mathbf{23.12}$ & $23.62$ \\
Top5  & $\mathbf{6.73}$ & $6.75$ & $\mathbf{6.82}$   & $7.09$  \\
\midrule
\bottomrule
\end{tabular}
\end{sc}
\end{small}
\end{center}
\end{table}

\begin{figure}
\centering
\subfigure{\includegraphics[width=0.49\columnwidth]{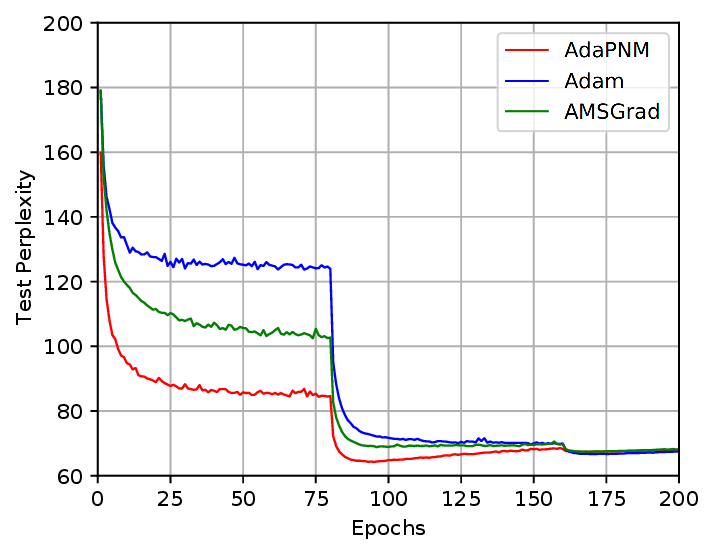}} 
\subfigure{\includegraphics[width=0.49\columnwidth]{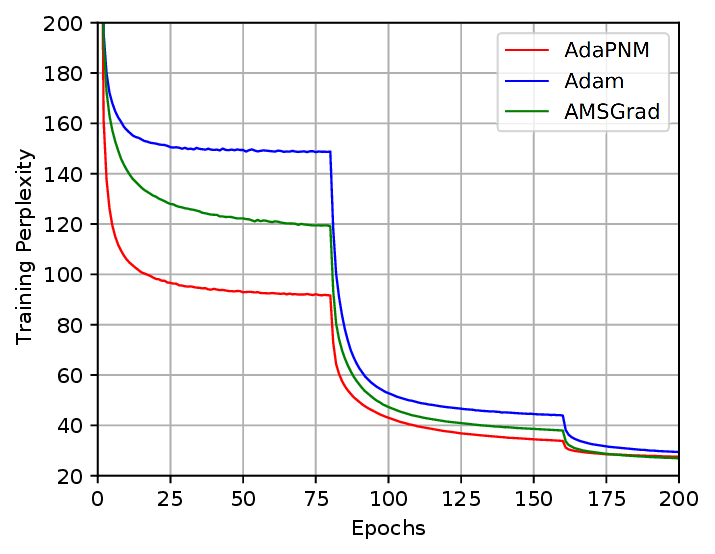}} 
\caption{The learning curves of LSTM on Penn Treebank. The optimal test perplexity of AdaPNM, Adam, and AMSGrad are $64.25$, $66.67$, and $67.40$, respectively. AdaPNM not only converges much faster than Adam and AMSGrad, but also yields lower test perplexity.}
 \label{fig:lstmadapnm}
\end{figure}

\begin{figure}
\centering
\subfigure{\includegraphics[width=0.49\columnwidth]{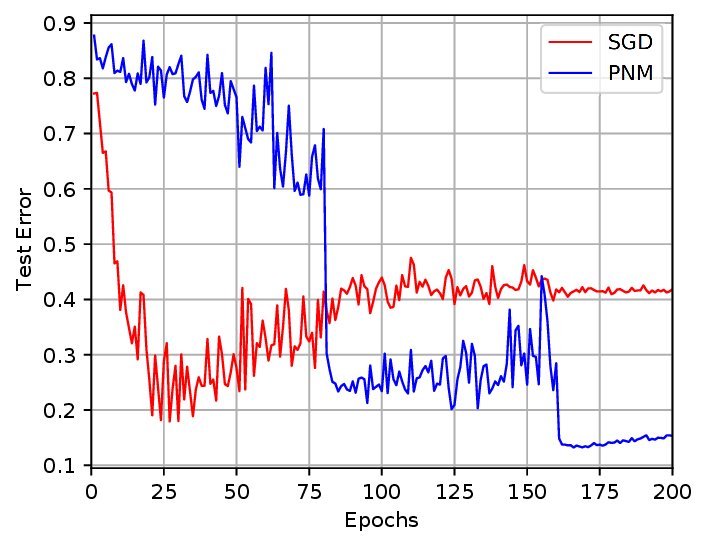}} 
\subfigure{\includegraphics[width=0.49\columnwidth]{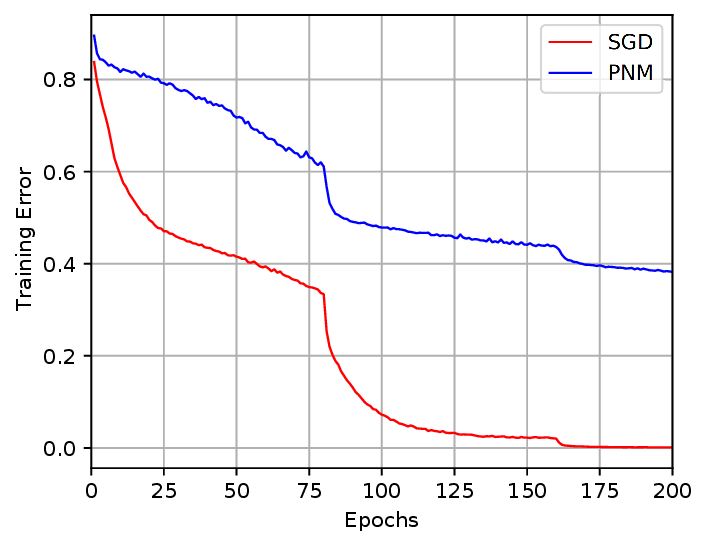}} \\
\subfigure{\includegraphics[width=0.49\columnwidth]{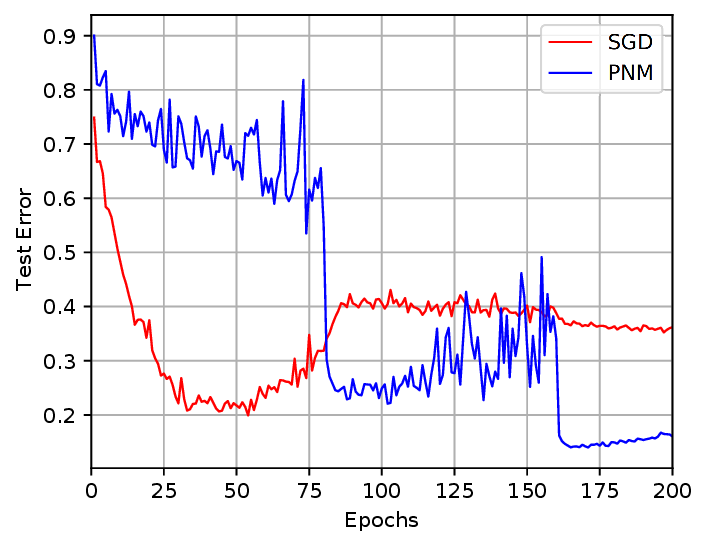}} 
\subfigure{\includegraphics[width=0.49\columnwidth]{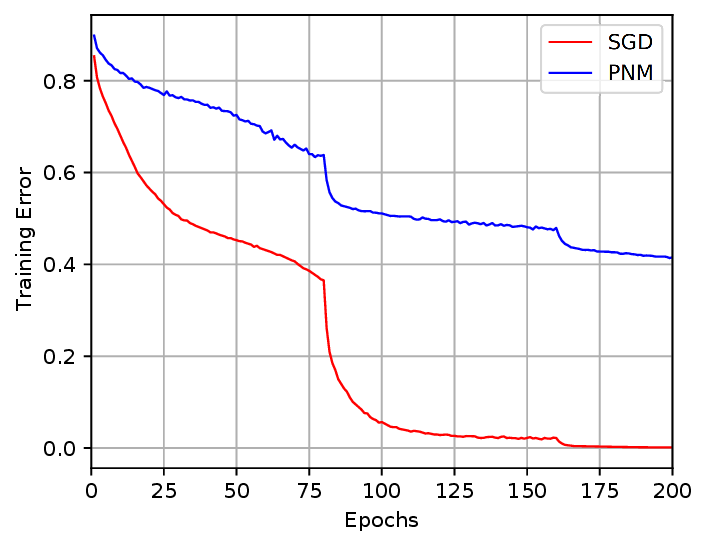}}
\caption{We compare PNM and SGD (with Momentum) by training ResNet34 on CIFAR-10 with $40\%$ asymmetric label noise (Top Row) and $40\%$ symmetric label noise (Bottom Row). Left: Test Curve. Right: Training Curve. We observe that PNM with a large $\beta_{0}$ may effectively relieve memorizing noisy labels and almost only learn clean labels, while SGD almost memorizes all noisy labels and has a nearly zero training error.}
 \label{fig:labelnoisepnm}
\end{figure}

\textbf{Image Classification on CIFAR-10 and CIFAF-100.}  In Table \ref{table:cifar}, we first empirically compare PNM and AdaPNM with popular stochastic optimizers, including SGD, Adam \citep{kingma2014adam}, AMSGrad \citep{reddi2019convergence}, AdamW \citep{loshchilov2018decoupled}, AdaBound \citep{luo2019adaptive}, Padam \citep{chen2018closing}, Yogi \citep{zaheer2018adaptive}, and RAdam \citep{liu2019variance} on CIFAR-10 and CIFAR-100. It demonstrates that PNM-based optimizers generalize significantly better than the corresponding conventional Momentum-based optimizers. In Figure \ref{fig:cifarpnm}, we note that PNM-based optimizers have better test performance even with similar training losses. 

\textbf{Image Classification on ImageNet.} Table \ref{table:imagenet} also supports that the PNM-based optimizers generalizes better than the corresponding conventional Momentum-based optimizers. Figure \ref{fig:imagenetpnm} shows that, on the experiment on ImageNet, Stochastic PNM consistently has lower test errors than SGD at the final epoch of each learning rate decay phase. It indicates that PNM not only generalizes better, but also converges faster on ImageNet due to stronger SGN. 

\textbf{Language Modeling.} As Adam is the most popular optimizer on Natural Language Processing tasks, we further compare AdaPNM (without amsgrad) with Adam (without amsgrad) as the baseline on the Language Modeling experiment. Figure \ref{fig:lstmadapnm} shows that AdaPNM outperforms the conventional Adam in terms of test performance and training speed. 

\textbf{Learning with noisy labels.} Deep networks can easily overfit training data even with random labels \citep{zhang2017understanding}. We run PNM and SGD on CIFAR-10 with $40\%$ label noise for comparing the robustness to noise memorization. Figure \ref{fig:labelnoisepnm} shows that PNM has much better generalization than SGD and outperforms SGD by more 20 points at the final epoch. It also demonstrates that enhancing SGN may effectively mitigate overfitting training data.

\begin{figure}
\centering
\subfigure{\includegraphics[width=0.49\columnwidth]{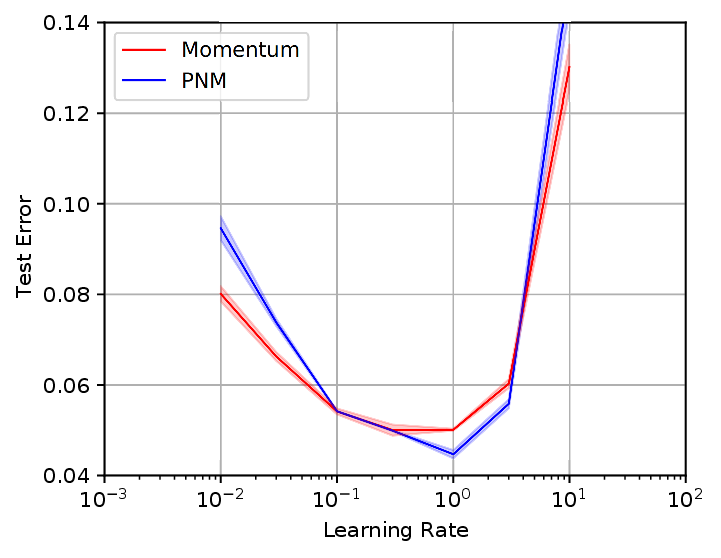}}
\subfigure{\includegraphics[width=0.49\columnwidth]{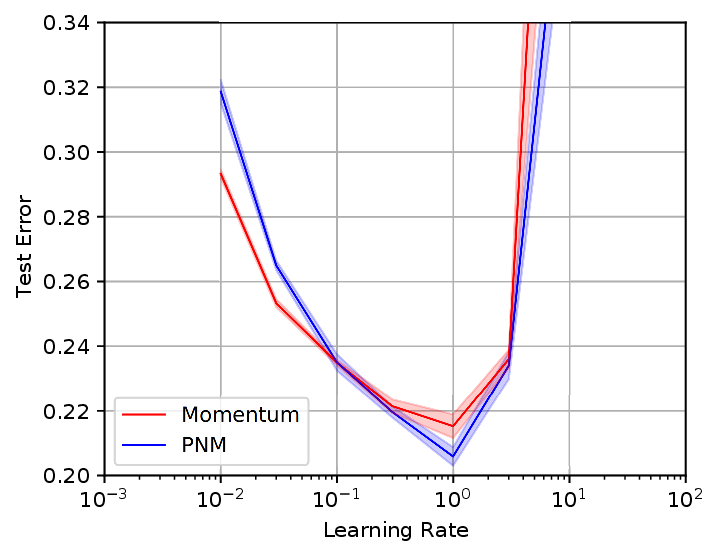}} 
        \caption{ We compare the generalization of PNM and SGD under various learning rates. Left: ResNet18 on CIFAR-10. Right: ResNet34 on CIFAR-100. The optimal performance of PNM is better than the conventional SGD.}
 \label{fig:lrresnetpnm}
\end{figure}

\begin{figure}
\centering
\subfigure{\includegraphics[width=0.9\columnwidth]{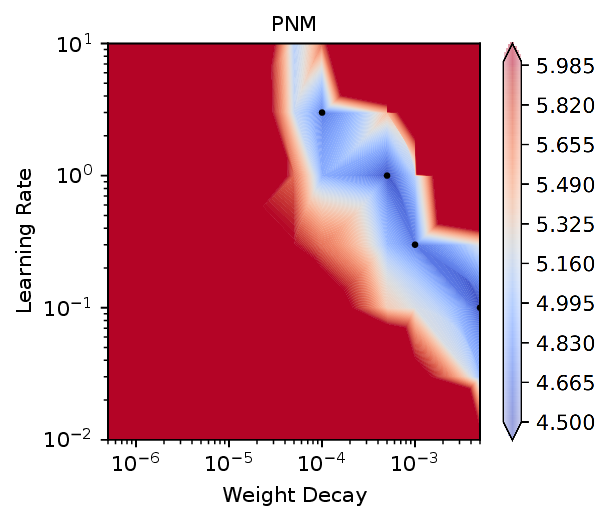}}
\subfigure{\includegraphics[width=0.9\columnwidth]{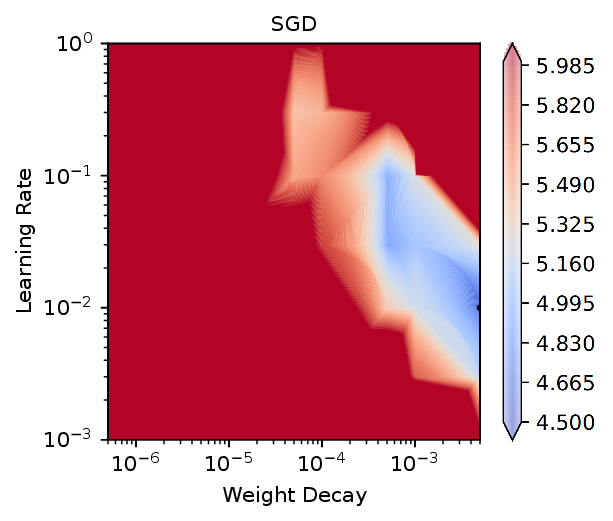}}
\caption{The test errors of ResNet18 on CIFAR-10 under various learning rates and weight decay. PNM has a much deeper and wider blue region near dark points ($\leq 4.83\%$) than SGD.}
 \label{fig:lrwdcifarpnm}
\end{figure}

\begin{figure}
\centering
\subfigure{\includegraphics[width=0.9\columnwidth]{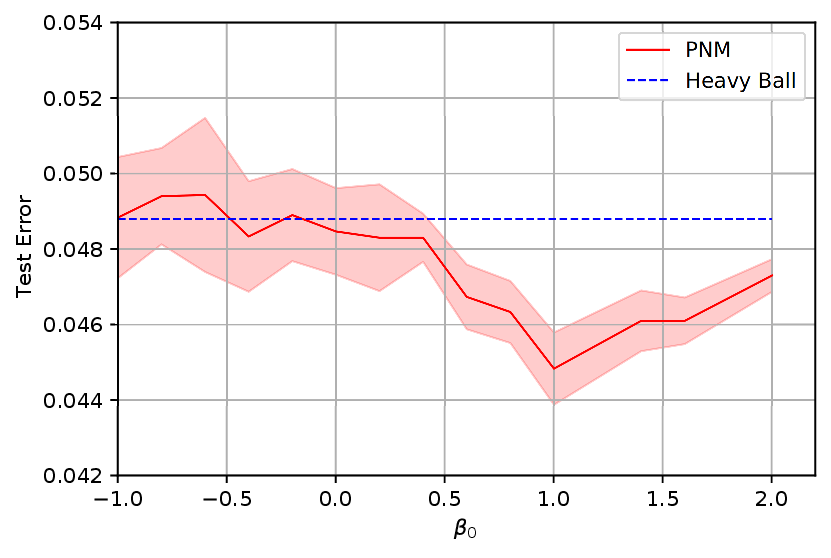}}
       \caption{ We train ResNet18 on CIFAR-10 under various $\beta_{0}$ choices. It demonstrates that PNM may achieve significantly better generalization with a wide range of $\beta_{0} > 0$, which corresponds to a positive-negative momentum pair for enhancing SGN as we expect. With any $\beta_{0} \in [-1, 0]$, the test performance does not sensitively depend on $\beta_{0}$, because this case cannot enhance SGN. }
 \label{fig:betapnm}    
\end{figure}

\textbf{Robustness to the learning rate and weight decay.} In Figure \ref{fig:lrresnetpnm}, we show that PNM can consistently outperform SGD under a wide range of learning rates. Figure \ref{fig:lrwdcifarpnm} further supports that PNM can be more robust to learning rates and weight decay than SGD, because PNM has a significantly deeper and wider basin in terms of test errors. This makes PNM a robust alternative to conventional Momentum. 

\textbf{Robustness to the new hyperparameter $\beta_{0}$.} Finally, we empirically study how PNM depends on the hyperparameter $\beta_{0}$ in practice in Figure \ref{fig:betapnm}. The result in Figure \ref{fig:betapnm} fully supports our motivation and theoretical analysis. It demonstrates that PNM may achieve significantly better generalization by choosing a proper $\beta_{0} > 0$, which corresponds to a positive-negative momentum pair for enhancing SGN as we expect. With any $\beta_{0} \in [-1, 0]$, the test performance does not sensitively depend on $\beta_{0}$. Because the case that $\beta_{0} \in [-1, 0]$ corresponds to a positive-positive momentum pair and, thus, cannot enhance SGN. 

\textbf{Supplementary experiments.} Please refer to Appendix \ref{sec:ablation}.

\section{Conclusion}
\label{sec:conclusion}

We propose a novel Positive-Negative Momentum method for manipulating SGN by using the difference of past gradients. The simple yet effective method can provably improve deep learning at very low costs. In practice, the PNM method is a powerful and robust alternative to the conventional Momentum method in classical optimizers and can usually make significant improvements. 

While we only use SGD and Adam as the conventional base optimizers, it is easy to incorporate PNM into other advanced optimizers. Considering the importance and the popularity of Momentum, we believe that the proposed PNM indicates a novel and promising approach to designing optimization dynamics by manipulating gradient noise.

\section*{Acknowledgement}
We thank Dr. Jinze Yu for his helpful discussion. MS was supported by the International Research Center for Neurointelligence (WPI-IRCN) at The University of Tokyo Institutes for Advanced Study.

\bibliography{deeplearning}
\bibliographystyle{icml} 

\appendix

\onecolumn 
 
\section{Proofs}
\label{sec:appproofs}

\subsection{Proof of Theorem \ref{pr:pnmconverge}}
We first propose several useful lemmas. Note that $\theta_{-2} = \theta_{-1} = \theta_{0}$. 
 \begin{lemma}
 \label{pr:convergencelm1}
Let $z_{t} = \frac{x_{t} - \beta x_{t-2}}{1- \beta}$. Under the conditions of Theorem \ref{pr:pnmconverge}, for any $t \geq 0$, we have
\begin{align*}
z_{t+1} - z_{t} = - \frac{\alpha}{1-\beta} g_{t}.
\end{align*}
 \end{lemma}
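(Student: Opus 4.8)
The plan is to prove the identity by direct substitution of the second-order recursion for $x_{t+1}$ into the definition of $z_t$; no induction, probabilistic argument, or estimation is needed, since the claim is a purely deterministic algebraic identity relating consecutive auxiliary iterates. The key observation is that the combination $z_t = \frac{x_t - \beta x_{t-2}}{1-\beta}$ is engineered precisely so that the momentum-like term $\beta(x_{t-1} - x_{t-2})$ appearing in the PNM update telescopes away, leaving a plain descent step in $z_t$.

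First I would expand the difference straight from the definition, grouping consecutive-index terms:
\begin{align*}
z_{t+1} - z_t = \frac{(x_{t+1} - \beta x_{t-1}) - (x_t - \beta x_{t-2})}{1-\beta} = \frac{(x_{t+1} - x_t) - \beta (x_{t-1} - x_{t-2})}{1-\beta}.
\end{align*}
Next I would substitute the rewritten PNM dynamics $x_{t+1} - x_t = -\alpha g_t + \beta(x_{t-1} - x_{t-2})$, i.e. the second line of the reformulated update rule. The two copies of $\beta(x_{t-1} - x_{t-2})$ then cancel exactly, yielding $z_{t+1} - z_t = -\frac{\alpha}{1-\beta} g_t$, which is the claim.

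The only point requiring a little care is the boundary case $t=0$, where $x_{-1}$ and $x_{-2}$ appear. Here I would invoke the stated initialization $\theta_{-2} = \theta_{-1} = \theta_0$, hence $x_{-2} = x_{-1} = x_0$, so that the recursion degenerates to $x_1 = x_0 - \alpha g_0$ and one checks directly that $z_0 = \frac{x_0 - \beta x_0}{1-\beta} = x_0$ while $z_1 = \frac{x_1 - \beta x_0}{1-\beta} = x_0 - \frac{\alpha}{1-\beta} g_0$, so the formula holds at $t=0$ as well. Since the interior computation is a one-line cancellation and the boundary simply reuses the same algebra with equal initial iterates, I do not expect any genuine obstacle; the content of the lemma is conceptual rather than technical, namely that it recasts the two-step PNM recursion as an ordinary (momentum-free) stochastic gradient step in the variable $z_t$, which is exactly the normal form that the subsequent convergence argument for Theorem \ref{pr:pnmconverge} will exploit.
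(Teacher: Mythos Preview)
Your proof is correct and follows essentially the same direct algebraic manipulation as the paper: both arguments rearrange the recursion $x_{t+1} = x_t - \alpha g_t + \beta(x_{t-1} - x_{t-2})$ so that the combination $x_{t+1} - \beta x_{t-1}$ (equivalently, your grouping of $x_{t+1}-x_t$ against $\beta(x_{t-1}-x_{t-2})$) collapses to $x_t - \beta x_{t-2} - \alpha g_t$, from which the claim is immediate after dividing by $1-\beta$. Your explicit treatment of the boundary case $t=0$ is a welcome addition that the paper leaves implicit.
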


\begin{proof}
Recall that 
\begin{align}
x_{t+1} = x_{t} - \alpha g_{t} + \beta (x_{t-1} - x_{t-2}).
\end{align}
Then we have
\begin{align}
x_{t+1} &= x_{t} - \alpha g_{t} + \beta (x_{t-1} - x_{t-2}) \\
x_{t+1}  - \beta x_{t-1} &=  x_{t}  - \beta x_{t-2} - \alpha g_{t} \\
\frac{x_{t+1}  - \beta x_{t-1}}{1-\beta} &= \frac{ x_{t}  - \beta x_{t-2}}{1- \beta} - \frac{\alpha }{1- \beta}g_{t} \\
z_{t+1} - z_{t} &= - \frac{\alpha}{1-\beta} g_{t}.
\end{align}
The proof is now complete.
\end{proof}

 \begin{lemma}
 \label{pr:convergencelm2}
Under the conditions of Theorem \ref{pr:pnmconverge}, for any $t\geq 0$, we have
\begin{align*}
\mathbb{E}[ f(z_{t+1}) - f(z_{t}) ] \leq  & \frac{1}{2L} \mathbb{E}[ \| f(z_{t}) -f(\theta_{t})\|^{2}] + \\
                                                      & \left( \frac{L\alpha^{2}}{(1-\beta)^{2}} - \frac{\alpha}{1- \beta} \right) \mathbb{E}[\|\nabla f(\theta_{k}) \|^{2}] + \frac{L\alpha^{2} \sigma^{2}}{2(1-\beta)^{2}} . 
\end{align*}
 \end{lemma}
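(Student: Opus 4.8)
The plan is to apply the $L$-smoothness descent inequality to the auxiliary sequence $z_t$, exploit the clean one-step recursion supplied by Lemma \ref{pr:convergencelm1}, take a conditional expectation over the noise, and finally split the surviving inner product with a carefully tuned Young's inequality. (I read the first term on the right-hand side of the claim as $\mathbb{E}[\|\nabla f(z_t)-\nabla f(\theta_t)\|^2]$ and the gradient-norm term as indexed by $t$; these are the quantities the argument produces.)

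First I would write $L$-smoothness at the pair $z_{t+1},z_t$ and substitute $z_{t+1}-z_t = -\frac{\alpha}{1-\beta}g_t$ from Lemma \ref{pr:convergencelm1}:
\[ f(z_{t+1}) - f(z_t) \leq -\frac{\alpha}{1-\beta}\langle \nabla f(z_t), g_t\rangle + \frac{L\alpha^2}{2(1-\beta)^2}\|g_t\|^2. \]
Next I would take the conditional expectation over $\xi_t$. Because $g_t$ is an unbiased estimator of $\nabla f(\theta_t)$ and $z_t$ is measurable with respect to the past, the linear term becomes $-\frac{\alpha}{1-\beta}\langle \nabla f(z_t), \nabla f(\theta_t)\rangle$; the bias–variance decomposition together with $\mathbb{E}[\|g_t-\nabla f(\theta_t)\|^2]\leq \sigma^2$ gives $\mathbb{E}[\|g_t\|^2]\leq \|\nabla f(\theta_t)\|^2 + \sigma^2$, which supplies the additive noise contribution $\frac{L\alpha^2\sigma^2}{2(1-\beta)^2}$.

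The decisive step is to rewrite the inner product by inserting $\nabla f(z_t) = (\nabla f(z_t)-\nabla f(\theta_t)) + \nabla f(\theta_t)$, so that $-\langle \nabla f(z_t),\nabla f(\theta_t)\rangle = -\langle \nabla f(z_t)-\nabla f(\theta_t),\nabla f(\theta_t)\rangle - \|\nabla f(\theta_t)\|^2$, and then to apply Young's inequality $|\langle a,b\rangle|\leq \frac{\rho}{2}\|a\|^2 + \frac{1}{2\rho}\|b\|^2$ to the cross term with the precise choice $\rho = \frac{1-\beta}{L\alpha}$. This choice is exactly what forces the coefficient of $\|\nabla f(z_t)-\nabla f(\theta_t)\|^2$ to collapse to $\frac{1}{2L}$ after multiplication by the prefactor $\frac{\alpha}{1-\beta}$. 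Collecting the three contributions to $\|\nabla f(\theta_t)\|^2$ — namely $-\frac{\alpha}{1-\beta}$ from the dominant linear term and $\frac{L\alpha^2}{2(1-\beta)^2}$ twice, once from the Young cross term and once from the $\|g_t\|^2$ bound — yields the advertised coefficient $\frac{L\alpha^2}{(1-\beta)^2}-\frac{\alpha}{1-\beta}$.

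I expect the only delicate point to be the bookkeeping in this last step: tuning $\rho$ so that all target coefficients line up simultaneously, and deliberately leaving $\|\nabla f(z_t)-\nabla f(\theta_t)\|^2$ as an unresolved term rather than bounding it through the $L$-Lipschitzness of $\nabla f$ here. That residual gradient-difference term is precisely what a subsequent lemma will control by bounding $\|z_t-\theta_t\|$, following the unified-momentum strategy of \citet{yan2018unified}; everything else is routine algebra.
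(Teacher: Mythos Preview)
Your proposal is correct and follows essentially the same route as the paper's proof: $L$-smoothness at $(z_{t+1},z_t)$, substitution of Lemma~\ref{pr:convergencelm1}, expectation to kill the $\xi_t$ cross term, the splitting $\nabla f(z_t)=(\nabla f(z_t)-\nabla f(\theta_t))+\nabla f(\theta_t)$, and a Young-type bound on the resulting cross term with parameter $\rho=\frac{1-\beta}{L\alpha}$. The paper labels that last step ``Cauchy--Schwarz'' but applies exactly the weighted inequality you spell out, and your reading of the typos in the statement ($\nabla f$ rather than $f$, index $t$ rather than $k$, and the missing $\sigma^2$ in the final constant) matches what the paper's own derivation actually produces.
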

 
\begin{proof}
As $f(\theta)$ is $L$-smooth, we have
\begin{align}
f(z_{t+1}) \leq f(z_{t}) + \nabla f(z_{t})^{\top}(z_{t+1} - z_{t}) + \frac{L}{2} \| z_{t+1} - z_{t} \|^{2}.
\end{align}
By Lemma \ref{pr:convergencelm1}, we obtain
\begin{align}
f(z_{t+1}) \leq f(z_{t}) - \frac{\alpha}{1-\beta} \nabla f(z_{t})^{\top}g_{t} + \frac{L\alpha^{2}}{2(1-\beta)^{2}} \|g_{t}\|^{2} .
\end{align}
Recall that $g_{t} = \nabla f(\theta_{t}) + \xi_{t}$ for stochastic optimization. Then 
\begin{align*}
f(z_{t+1})  \leq & f(z_{t}) - \frac{\alpha}{1-\beta} \nabla f(z_{t})^{\top} (\nabla f(\theta) + \xi_{t}) + \frac{L\alpha^{2}}{2(1-\beta)^{2}} \|\nabla f(\theta_{t}) + \xi_{t} \|^{2}  \\
       =  & f(z_{t}) - \frac{\alpha}{1-\beta} \nabla f(z_{t})^{\top} (\nabla f(\theta) + \xi_{t}) + \frac{L\alpha^{2}}{2(1-\beta)^{2}} \|\nabla f(\theta_{t}) + \xi_{t} \|^{2} \\
       = & f(z_{t}) - \frac{\alpha}{1-\beta} \nabla f(z_{t})^{\top} \xi_{t}  - \frac{\alpha}{1-\beta} ( \nabla f(z_{t}) -  \nabla f(\theta_{t}) )^{\top} \nabla f(\theta_{t}) - \\
            &   \frac{\alpha}{1-\beta}\| \nabla f(\theta_{t}) \|^{2} + \frac{L\alpha^{2}}{2(1-\beta)^{2}} \|\nabla f(\theta_{t}) + \xi_{t} \|^{2} \\
\end{align*}
Recall that $\mathbb{E}[\xi_{t}] =0$ and $\mathbb{E}[\| g(\theta, \xi) - \nabla f(\theta) \|] \leq \sigma^{2} $. We take expectation on both sides, and obtain
\begin{align*}
\mathbb{E}[ f(z_{t+1}) - f(z_{t})]  \leq &  - \frac{\alpha}{1-\beta} \mathbb{E}[ ( \nabla f(z_{t}) -  \nabla f(\theta_{t}) )^{\top} \nabla f(\theta_{t})]  + \\
                                                          &  \left(\frac{L\alpha^{2}}{2(1-\beta)^{2}} -  \frac{\alpha}{1-\beta} \right) \mathbb{E}[\| \nabla f(\theta_{t}) \|^{2}] + \frac{L\alpha^{2} \sigma^{2}}{2(1-\beta)^{2}} 
\end{align*}
By the Cauchy-Schwarz Inequality, we obtain
\begin{align*}
\mathbb{E}[ f(z_{t+1}) - f(z_{t})]  \leq &  \mathbb{E}\left[  \frac{1}{2L}\| \nabla f(z_{t}) -  \nabla f(\theta_{t})\|^{2} + \frac{L\alpha^{2}}{2(1-\beta)^{2}} \| \nabla f(\theta_{t})\|^{2} \right]  + \\
                                                          &  \left(\frac{L\alpha^{2}}{2(1-\beta)^{2}} -  \frac{\alpha}{1-\beta} \right) \mathbb{E}[\| \nabla f(\theta_{t}) \|^{2}] + \frac{L\alpha^{2} \sigma^{2}}{2(1-\beta)^{2}} \\
                                                       = &  \frac{1}{2L} \mathbb{E}[ \| f(z_{t}) -f(\theta_{t})\|^{2}] + \\
                                                      & \left( \frac{L\alpha^{2}}{(1-\beta)^{2}} - \frac{\alpha}{1- \beta} \right) \mathbb{E}[\|\nabla f(\theta_{k}) \|^{2}] + \frac{L\alpha^{2} \sigma^{2}}{2(1-\beta)^{2}}
\end{align*}
The proof is now complete.
\end{proof}

 \begin{lemma}
 \label{pr:convergencelm3}
Under the conditions of Theorem \ref{pr:pnmconverge}, for any $t\geq 0$, we have
\begin{align*}
\mathbb{E}[  \| \nabla f(z_{t}) - \nabla f(\theta_{t}) \|^{2} ] \leq  \frac{L^{2}\alpha^{2} [\beta + \beta_{0}(1-\beta) ]^{2} (G^{2} + \sigma^{2}) }{(1-\beta)^{4}} . 
\end{align*}
 \end{lemma}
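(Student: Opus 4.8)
The plan is to bound the gradient difference by the iterate difference via $L$-smoothness, and then control $\mathbb{E}[\|z_t - \theta_t\|^2]$ by writing $z_t - \theta_t$ in closed form. First I would apply the Lipschitz-gradient property of an $L$-smooth function, $\|\nabla f(z_t) - \nabla f(\theta_t)\|^2 \le L^2 \|z_t - \theta_t\|^2$, so that the entire lemma reduces to showing $\mathbb{E}[\|z_t - \theta_t\|^2] \le \frac{\alpha^2 [\beta + \beta_0(1-\beta)]^2 (G^2 + \sigma^2)}{(1-\beta)^4}$.

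Next I would compute $z_t - \theta_t$ from the definitions $z_t = \frac{x_t - \beta x_{t-2}}{1-\beta}$ and $x_t = \theta_t - \beta_0 m_{t-1}$. Writing $z_t - \theta_t = (z_t - x_t) + (x_t - \theta_t)$, the first piece is $\frac{\beta}{1-\beta}(x_t - x_{t-2})$ and the second is $-\beta_0 m_{t-1}$. I would then feed in the momentum recursion $m_t = \beta m_{t-2} + (1-\beta) g_t$, whose key consequence is the telescoping identity $m_{t-1} - \beta m_{t-3} = (1-\beta) g_{t-1}$, together with the iterate-difference relation $x_{t+1} - x_t = \alpha m_t$ (equivalently $x_t - x_{t-2} = \alpha(m_{t-1}+m_{t-2})$ up to sign). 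The goal of this step is to collect the coefficients and show that $z_t - \theta_t$ equals $\frac{\alpha[\beta + \beta_0(1-\beta)]}{(1-\beta)^2}$ times a single momentum/gradient term whose second moment I can control; the factor $[\beta + \beta_0(1-\beta)]$ is precisely what arises from combining the $\frac{\beta}{1-\beta}$ weight inherited from $z_t - x_t$ with the $\beta_0$ weight from the momentum lag $x_t - \theta_t$.

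For the second-moment bound I would exploit that each momentum term $m_s = (1-\beta)\sum_{j \ge 0} \beta^j g_{s-2j}$ is a convex combination of same-parity stochastic gradients, since the weights $(1-\beta)\beta^j$ are nonnegative and sum to one. By Jensen's inequality applied to the convex map $\|\cdot\|^2$, $\mathbb{E}[\|m_s\|^2] \le \max_k \mathbb{E}[\|g_k\|^2]$, and each $\mathbb{E}[\|g_k\|^2] = \|\nabla f(\theta_k)\|^2 + \mathbb{E}[\|\xi_k\|^2] \le G^2 + \sigma^2$, where the cross term drops out because $\mathbb{E}[\xi_k] = 0$ and I invoke the assumptions $\|\nabla f\| \le G$ and $\mathbb{E}[\|\xi\|^2] \le \sigma^2$. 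Multiplying the squared prefactor by this bound and by $L^2$ yields the claimed inequality.

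I expect the coefficient bookkeeping in the second step to be the main obstacle: one must track the $\beta_0$-weighted momentum lag from $x_t - \theta_t$ together with the $\frac{\beta}{1-\beta}$-weighted two-step iterate difference, and verify that, after applying the momentum recursion, everything collapses into the single prefactor $\frac{\alpha[\beta + \beta_0(1-\beta)]}{(1-\beta)^2}$ rather than leaving residual cross-parity momentum terms. A secondary care point is the second-moment bound for a momentum average, where I must ensure the noise cross-terms vanish (by the zero-mean and approximate independence of the $\xi_k$) so that the convexity estimate $G^2 + \sigma^2$ is legitimate.
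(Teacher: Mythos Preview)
Your overall strategy—$L$-smoothness followed by an algebraic expression for $z_t-\theta_t$ and a Jensen-type second-moment bound—matches the paper's. The gap is precisely where you suspected: the ``collapse'' you hope for does not occur. Carrying out your decomposition with $z_t-x_t=\tfrac{\beta}{1-\beta}(x_t-x_{t-2})$, $x_t-\theta_t=-\eta_0\beta_0 m_{t-1}$, and $x_t-x_{t-2}=-\eta_0(m_{t-1}+m_{t-2})$ (note $x_{t+1}-x_t=-\eta_0 m_t$ with $\eta_0=\alpha/(1-\beta)$, not $\alpha m_t$) gives
\[
z_t-\theta_t \;=\; -\eta_0\Bigl[\bigl(\tfrac{\beta}{1-\beta}+\beta_0\bigr)m_{t-1}+\tfrac{\beta}{1-\beta}\,m_{t-2}\Bigr],
\]
so the cross-parity term $m_{t-2}$ survives with coefficient $\tfrac{\beta}{1-\beta}$ and cannot be absorbed by the momentum recursion $m_{t-1}=\beta m_{t-3}+(1-\beta)g_{t-1}$. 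Consequently, if you now apply your Jensen bound $\mathbb{E}\|m_s\|^2\le G^2+\sigma^2$ termwise (or view the whole thing as a convex combination of gradients), the total coefficient you get is $\tfrac{2\beta+\beta_0(1-\beta)}{1-\beta}$, not the $\tfrac{\beta+\beta_0(1-\beta)}{1-\beta}$ you need for the stated constant.

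The paper handles the second step differently: rather than stopping at the two momentum terms, it writes $z_t-\theta_t$ via $\theta_t-\theta_{t-2}$ and the single gradient $g_{t-1}$ (using $m_{t-1}-\beta m_{t-3}=(1-\beta)g_{t-1}$), then unrolls the two-step recursion $\theta_t-\theta_{t-2}=\beta(\theta_{t-2}-\theta_{t-4})-\alpha(g_{t-1}+g_{t-2})$ all the way down to a geometric sum of the individual stochastic gradients $g_k$, and finally invokes the coefficient-sum argument of Lemma~4 in \citet{yan2018unified} to obtain the factor $(\beta\Gamma_t+\beta_0)^2\le\bigl(\tfrac{\beta}{1-\beta}+\beta_0\bigr)^2$. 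Your Jensen bound on $\mathbb{E}\|m_s\|^2$ is perfectly valid and morally equivalent to that last step, but to land on the paper's constant you must organize the expansion at the level of individual $g_k$'s (so that the $\beta_0$-weighted piece contributes only the single gradient $g_{t-1}$) rather than at the level of the two momenta.
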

 
\begin{proof}
As $f$ is $L$-smooth, we have
\begin{align*}
\| \nabla f(z_{t}) - \nabla f(\theta_{t})\|^{2} \leq L^{2} \|z_{t} - \theta_{t} \|^{2}.
\end{align*}
Recall that $z_{t} = \frac{x_{t} -\beta x_{t-2}}{1-\beta}$ and $x_{t} = \theta_{t} +  \frac{\alpha \beta_{0}}{1-\beta} m_{t-1}$. Then 
\begin{align*}
\| \nabla f(z_{t}) - \nabla f(\theta_{t})\|^{2} \leq & L^{2} \|z_{t} - \theta_{t} \|^{2} \\
                                                                  = & L^{2} \| \frac{x_{t} - \beta x_{t-2} }{1-\beta} - \theta_{t} \| \\
                                                                   = & L^{2} \| \frac{\theta_{t} - \beta \theta_{t-2} }{1-\beta} + \frac{\alpha \beta_{0}}{1-\beta} g_{t-1} - \theta_{t} \| \\
                                                                   = & \frac{L^{2} }{ (1-\beta)^{2}} \| \beta (\theta_{t} - \theta_{t-2}) + \alpha\beta_{0} g_{t-1} \| .
\end{align*}
Without loss of generality, we assume $t$ is an even number. Recalling that the updating rule of $x_{t}$, we have
\begin{align*}
\theta_{t} - \theta_{t-2} = & \beta (\theta_{t-2} - \theta_{t-4}) - \alpha (g_{t-1} + g_{t-2}) \\
                                    = & - \alpha \sum_{k=0}^{\frac{t}{2}-1} \beta^{k} (g_{t-1-2k} + g_{t-2-2k}) .
\end{align*}
Then 
\begin{align*}
\| \nabla f(z_{t}) - \nabla f(\theta_{t})\|^{2} \leq & \frac{L^{2} \alpha^{2}}{ (1-\beta)^{2}} \| \beta \sum_{k=0}^{\frac{t}{2}-1} \beta^{k} (g_{t-1-2k} + g_{t-2-2k}) - \beta_{0} g_{t-1} \|^{2} .
\end{align*}
Let $\Gamma_{k} = \frac{1-\beta^{\frac{k}{2}} }{1-\beta}$. Then 
\begin{align*}
\| \nabla f(z_{t}) - \nabla f(\theta_{t})\|^{2} \leq & \frac{L^{2} \alpha^{2}}{ (1-\beta)^{2}} \| \beta \sum_{k=0}^{\frac{t}{2}-1} \beta^{k} (g_{t-1-2k} + g_{t-2-2k}) - \beta_{0} g_{t-1} \|^{2} .
\end{align*}
Similar to the proof of Lemma 4 in \citet{yan2018unified}, taking expectation on both sides gives
\begin{align*}
\mathbb{E} [\| \nabla f(z_{t}) - \nabla f(\theta_{t})\|^{2} ] \leq &  \frac{L^{2} \alpha^{2}}{ (1-\beta)^{2}}  (\beta \Gamma_{t} + \beta_{0} )^{2} (G^{2} + \sigma^{2})    \\
                                                                                       \leq &  \frac{L^{2} \alpha^{2}}{ (1-\beta)^{2}}  ( \frac{\beta}{1-\beta}+ \beta_{0} )^{2} (G^{2} + \sigma^{2}) .
\end{align*}
The proof is now complete.
\end{proof}

\begin{proof}
The proof of Theorem \ref{pr:pnmconverge} is organized as follows.

We first define $Q$ and $Q^{\prime}$ as 
\begin{align}
Q = \frac{\alpha}{1-\beta} - \frac{L\alpha^{2}}{(1-\beta)^{2}}
\end{align}
and 
\begin{align}
Q^{\prime} = \frac{L^{2} \alpha^{2} (\beta + \beta_{0}(1-\beta) )^{2} (G^{2} + \sigma^{2}) }{2(1-\beta)^{4}} + \frac{L\alpha^{2} \sigma^{2}}{2(1-\beta)^{2}}
\end{align}
By Lemma \ref{pr:convergencelm2} and Lemma \ref{pr:convergencelm3}, we have 
\begin{align}
\mathbb{E}[f(z_{k+1}) - f(z_{k})] \leq - Q \mathbb{E}[\| \nabla f(\theta_{k})\|^{2}] + Q^{\prime}
\end{align}
By summing the above inequalities for $k=0, \ldots, t $ and noting that $\alpha < \frac{1-\beta}{L}$, we have
\begin{align}
Q \sum_{k=0}^{t} \mathbb{E}[ \| \nabla f(\theta_{k})\|^{2}] \leq & \mathbb{E}[f(z_{0}) - f(z_{t+1})]  + (t+1)Q^{\prime} \\
                                                                                          \leq &  \mathbb{E}[f(z_{0}) - f^{\star}]  + (t+1)Q^{\prime}.
\end{align} 
Note that $z_{0} = \theta_{0}$. Then 
\begin{align}
\min_{k=0,\ldots, t} \mathbb{E}[ \| \nabla f(\theta_{k})\|^{2}] \leq \frac{f(\theta_{0}) - f^{\star} }{ (t+1) Q} + \frac{Q^{\prime}}{Q}.
\end{align} 
As $\alpha \leq \frac{1-\beta}{2L}$, we have $Q \geq \frac{\alpha}{2(1-\beta)} $. Then 
\begin{align}
\min_{k=0,\ldots, t} \mathbb{E}[ \| \nabla f(\theta_{k})\|^{2}] \leq \frac{ (f(\theta_{0}) - f^{\star} ) (1-\beta)}{ (t+1) \alpha} + \frac{2(1-\beta)}{\alpha} Q^{\prime}.
\end{align} 
As $ \frac{\eta}{ \sqrt{(1+\beta_{0})^{2} + \beta_{0}^{2}}} =  \frac{\alpha}{(1-\beta)} = \min \{ \frac{1}{2L}, \frac{C}{\sqrt{t+1}} \}$, we obtain
\begin{align}
\min_{k=0,\ldots, t} \mathbb{E}[ \| \nabla f(\theta_{k})\|^{2}] \leq & \frac{ 2(f(\theta_{0}) -f^{\star})}{t+1} \max \{ 2L, \frac{\sqrt{t+1}}{C} \} + \\ 
                                                                                           & \frac{ L \alpha (\beta + \beta_{0} (1-\beta))^{2} (G^{2} + \sigma^{2}) + L \alpha (1-\beta)^{2} \sigma^{2}}{(1-\beta)^{3}} \\
                                                                                           \leq & \frac{ 2(f(\theta_{0}) -f^{\star})}{t+1} \max \{ 2L, \frac{\sqrt{t+1}}{C} \} + \\ 
                                                                                           & \frac{C}{\sqrt{t+1}} \frac{ L (\beta + \beta_{0} (1-\beta))^{2} (G^{2} + \sigma^{2}) + L(1-\beta)^{2} \sigma^{2}}{(1-\beta)^{2}}
\end{align} 
The proof is now complete.
\end{proof}

\subsection{Proof of Theorem \ref{pr:pnmgeneralization}}

\begin{proof}
By Theorem \ref{pr:pac}, we first write the upper bound of the generalization gap for the Stochastic PNM posterior $Q(\gamma)$ as
\begin{align}
B(\gamma) = 4\sqrt{\frac{1}{N} [\KL(Q(\gamma) \| P) + \ln(\frac{2N}{\Delta})]}.
\end{align} 
Then we calculate the gradient of $B(\gamma)$ with respect to $\gamma$ as
\begin{align}
\nabla_{\gamma} B(\gamma) = 2 \left[\frac{1}{N} [\KL(Q(\gamma) \| P) + \ln(\frac{2N}{\Delta})] \right]^{-\frac{1}{2}} \frac{1}{N} \nabla_{\gamma}  \KL(Q(\gamma) \| P).
\end{align} 

By Assumption \ref{as:gaussprior} and Equation \eqref{eq:pnmsigma}, we have $ \KL(Q(\gamma) \| P)$ as Equation \eqref{eq:priorposteriorkldiv}. 

Then we have $\nabla_{\gamma}  \KL(Q(\gamma) \| P)$ as Equation \eqref{eq:kldivgrad}:
\begin{align*}
\nabla_{\gamma} \KL(Q(\gamma) \| P) =  \frac{n}{2} (\frac{\eta}{2B\lambda} - \frac{1}{\gamma}).
\end{align*}

Under the condition that $\frac{\eta}{2B\lambda} < 1$, we have
\begin{align}
\nabla_{\gamma} B(\gamma) = 2 \left[\frac{1}{N} [\KL(Q(\gamma) \| P) + \ln(\frac{2N}{\Delta})] \right]^{-\frac{1}{2}} \frac{1}{N} \frac{n}{2} (\frac{\eta}{2B\lambda} - \frac{1}{\gamma}) < 0
\end{align} 
for all $\gamma \in [1, \frac{2B\lambda}{\eta}]$. Note that $\gamma \in (1, \frac{2B\lambda}{\eta}]$ corresponds to Stochastic PNM and $\gamma = 1$ corresponds to SGD/Momentum.

It means that the upper bound $B(\gamma)$ is a monotonically decreasing function on the interval of $\gamma \in (1, \frac{2B\lambda}{\eta})$. 
We have
\begin{align}
B(\gamma) < B(1) 
\end{align} 
for $\gamma \in (1, \frac{2B\lambda}{\eta}]$. 

As $\Sup \Gen(Q_{\mathrm{pnm}})  = B(\gamma) $ and $\Sup \Gen(Q_{sgd}) = B(1)$, we have 
\begin{align}
\Sup \Gen(Q_{\mathrm{pnm}}) < \Sup \Gen(Q_{sgd}) 
\end{align} 
for $\gamma \in (1, \frac{2B\lambda}{\eta}]$.

The proof is now complete.

\end{proof}

\section{Implementation Details}
\label{sec:appexperiment}

\subsection{Image classification on CIFAR-10 and CIFAR-100}

\textbf{Data Preprocessing For CIFAR-10 and CIFAR-100:} We perform the common per-pixel zero-mean unit-variance normalization, horizontal random flip, and $32 \times 32$ random crops after padding with $4$ pixels on each side. 

\textbf{Hyperparameter Settings:} We select the optimal learning rate for each experiment from $\{0.0001, 0.001, 0.01, 0.1, 1, 10\}$ for PNM and SGD (with Momentum) and use the default learning rate for adaptive gradient methods. In the experiments on CIFAF-10 and CIFAR-100: $\eta=1$ for PNM; $\eta=0.1$ for SGD (with Momentum); $\eta=0.001$ for AdaPNM, Adam, AMSGrad, AdamW, AdaBound, and RAdam; $\eta=0.01$ for Padam. For the learning rate schedule, the learning rate is divided by 10 at the epoch of $\{80,160\}$ for CIFAR-10 and $\{100,150\}$ for CIFAR-100, respectively. The batch size is set to $128$ for both CIFAR-10 and CIFAR-100. 

The strength of weight decay is default to $\lambda = 0.0005$ as the baseline for all optimizers unless we specify it otherwise. Recent work \citet{xie2020stable} found that popular optimizers with $\lambda = 0.0005$ often yields test results than $\lambda = 0.0001$ on CIFAR-10 and CIFAR-100. Some related papers used $\lambda = 0.0001$ directly and obtained lower baseline performance than ours. We leave the empirical results with the weight decay setting $\lambda = 0.0001$ in Appendix \ref{sec:ablation}.

Note that decoupled weight decay which are used in AdamW has been rescaled by $1000$ times, which actually corresponds to $\lambda_{W} =0.5$. Following related papers, our work chose the type of weight decay for classical optimizers suggested by original papers. According to \citet{xie2020stable}, decoupled weight decay instead of conventional $L_{2}$ regularization is recommended in the presence of large gradient noise. We use decoupled weight decay in PNM and AdaPNM unless we specify it otherwise.

We set the momentum hyperparameter $\beta_{1} = 0.9$ for SGD with Momentum and PNM optimizers. As for other optimizer hyperparameters, we apply the default hyperparameter settings directly.  

We repeat each experiment for three times, and compute the standard deviations as error bars.

\subsection{Image classification on ImageNet}

\textbf{Data Preprocessing For ImageNet:} For ImageNet, we perform the per-pixel zero-mean unit-variance normalization, horizontal random flip, and the resized random crops where the random size (of 0.08 to 1.0) of the original size and a random aspect ratio (of $\frac{3}{4}$ to $\frac{4}{3}$) of the original aspect ratio is made. 

\textbf{Hyperparameter Settings for ImageNet:} We select the optimal learning rate for each experiment from $\{0.0001, 0.001, 0.01, 0.1, 1, 10\}$ for all tested optimizers. For the learning rate schedule, the learning rate is divided by 10 at the epoch of $\{60,120\}$. We train each model for $150$ epochs. The batch size is set to $256$. The weight decay hyperparameters are chosen as $\lambda = 0.0001$. We set the momentum hyperparameter $\beta_{1} = 0.9$ for SGD with Momentum and PNM optimizers. As for other optimizer hyperparameters, we still apply the default hyperparameter settings directly. 

\subsection{Language modeling}

We use a classical language model, Long Short-Term Memory (LSTM) \citep{hochreiter1997long} with 2 layers, 512 embedding dimensions, and 512 hidden dimensions, which has $14$ million model parameters and is similar to the ``medium LSTM'' in \citet{zaremba2014recurrent}. Note that our baseline performance is better than the reported baseline performance in \citet{zaremba2014recurrent}. The benchmark task is the word-level Penn TreeBank \citep{marcus1993building}.

\textbf{Hyperparameter Settings.} Batch Size: $B=20$. BPTT Size: $bptt=35$. We select the optimal learning rate for each experiment from $\{0.0001, 0.001, 0.01, 0.1, 1, 10\}$ for all tested optimizers. The weight decay hyperparameter is chosen as $\lambda=10^{-5}$. $L_{2}$ regularization applies to all tested optimizers. The dropout probability is set to $0.5$. We clipped gradient norm to $1$.

\subsection{Learning with noisy labels}

We trained ResNet34 via PNM and SGD (with Momentum) on corrupted CIFAR-10 with various asymmetric and symmetric label noise. The symmetric label noise is generated by flipping every label to other labels with uniform flip rates $\{ 20\%, 40\%\}$. The asymmetric label noise by flipping label $i$ to label $i+1$ (except that label 9 is flipped to label 0) with pair-wise flip rates $ \{20\%, 40\% \}$. We employed the code of \citet{han2018co} for generating noisy labels for CIFAR-10 and CIFAR-100.

\textbf{Hyperparameter Settings:} The learning rate setting: $\eta=1$ for PNM; $\eta=0.1$ for SGD (with Momentum). The batch size is set to $128$. We use the common setting $\lambda=0.0001$ in weight decay for both PNM and SGD. For the learning rate schedule, the learning rate is divided by 10 at the epoch of $\{80,160\}$. PNM uses $\beta_{0}=70$ for asymmetric label noise and $\beta_{0}=80$ for symmetric label noise.

\section{Supplementary Experiments}
\label{sec:ablation}

 \begin{table*}[t]
\caption{Test performance comparison of optimizers with $\lambda= 0.0001$.}
\label{table:cifarwd}
\begin{center}
\begin{small}
\begin{sc}
\resizebox{\textwidth}{!}{%
\begin{tabular}{ll|llllllllll}
\toprule
Dataset & Model    &  PNM & AdaPNM &SGD &Adam  &AMSGrad & AdamW & AdaBound & Padam & Yogi & RAdam  \\
\midrule
CIFAR-10   &ResNet18         & $\mathbf{4.86}$ &  $5.11$  & $5.58$  & $6.08 $ & $5.72$ &  $5.33$ &  $6.87$ & $5.83$ & $5.43$ & $5.81$ \\
                   &VGG16            &  $6.58$ & $\mathbf{6.43}$  & $6.92$  & $7.04$  & $6.68$  & $6.45$ & $7.33$ & $6.74$ & $6.69$ & $6.73$ \\
CIFAR-100 &ResNet34        &    $26.18$ & $\mathbf{22.94}$  & $24.92$  & $25.56$  & $24.74$ &  $23.61$ & $25.67$ & $25.39$ & $23.72$ & $25.65$ \\
                   &DenseNet121   &   $\mathbf{20.68}$ & $21.44$  & $20.98$   & $24.39 $  & $22.80$ &  $22.23$ & $24.23$ & $22.26$ & $22.40$ & $22.40$ \\
                   &GoogLeNet      &   $21.91$ & $\mathbf{21.62}$  & $21.89$ & $24.60$  & $24.05$ &  $21.71$ & $25.03$ & $ 26.69 $ & $22.56$ & $22.35$ \\
\bottomrule
\end{tabular}
}
\end{sc}
\end{small}
\end{center}
\end{table*}

\begin{figure}
\centering
\includegraphics[width=0.4\textwidth]{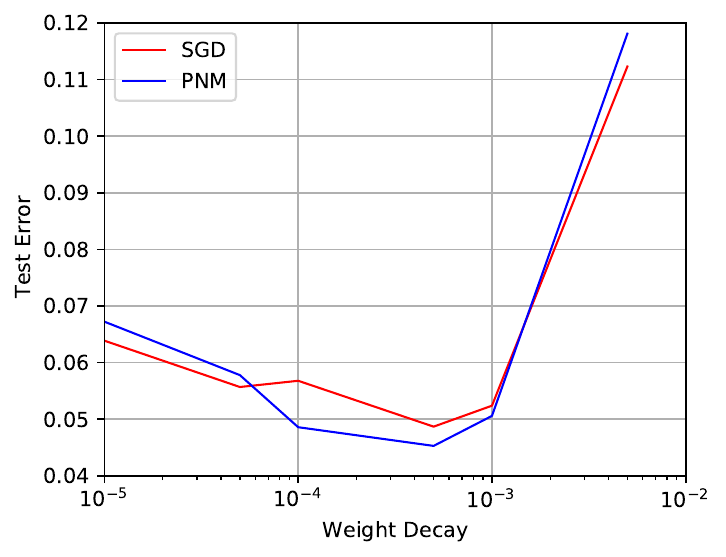} 
\caption{ We compare the generalization of PNM and SGD under various weight decay hyperparameters by training ResNet18 on CIFAR-10. The optimal performance of PNM is better than the conventional SGD.}
 \label{fig:wdresnetpnm}    
\end{figure}

\textbf{On the strength of weight decay.} We display the experimental results with $\lambda=0.0001$ in Table \ref{table:cifarwd}. Popular optimizers with $\lambda = 0.0005$ can yield test results than $\lambda = 0.0001$ on CIFAR-10 and CIFAR-100. Some related papers used $\lambda = 0.0001$ directly and obtained lower baseline performance than ours. Figure \ref{fig:wdresnetpnm} also supports that PNM outperforms Momentum under a wide range of weight decay.

\textbf{On the type of weight decay.} We have two observations in Figure \ref{fig:cifarpnml2}. First, PNM favors decoupled weight decay over $L_{2}$ regularization. Second, with either $L_{2}$ regularization or decoupled weight decay, PNM generalizes significantly better than SGD.

\textbf{On learning rate schedulers.} Figure \ref{fig:cifarcosine}, with cosine annealing and warm restart schedulers, PNM and AdaPNM also yields better results than SGD.

\textbf{Learning with noisy labels.} We also run PNM and SGD on CIFAR-10 with $20\%$ label noise for comparing the robustness to noise memorization. Figure \ref{fig:labelnoisepnmall} shows that PNM consistently outperforms SGD.

\begin{figure}
\center
\subfigure[ResNet18 on CIFAR-10]{\includegraphics[width =0.32\columnwidth ]{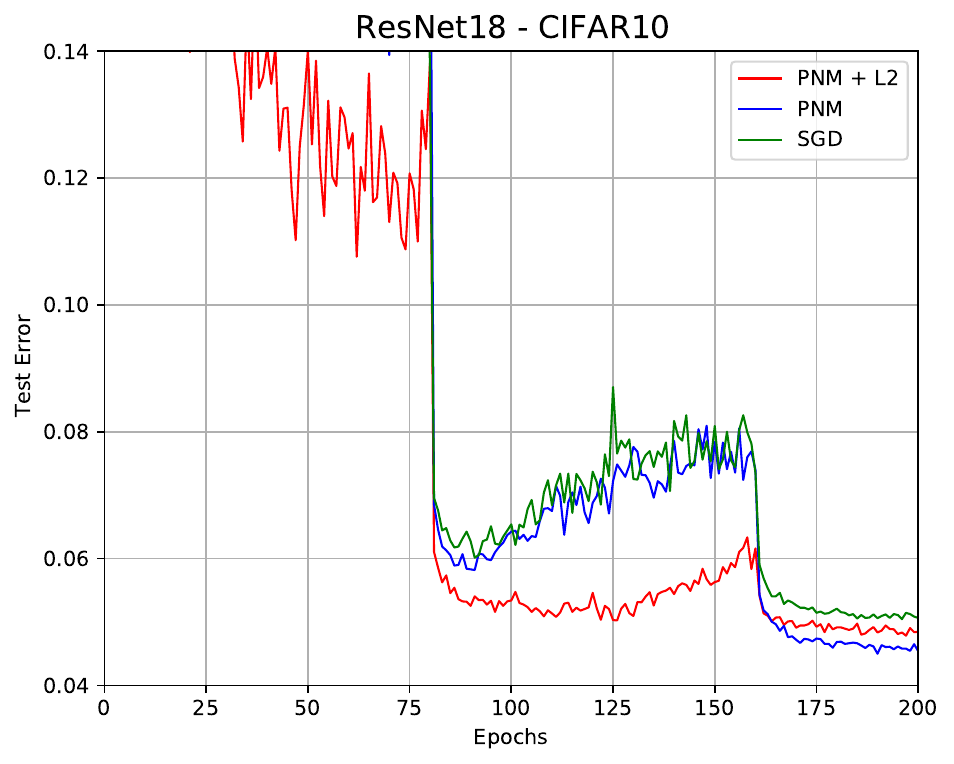}} 
\subfigure[ResNet34 on CIFAR-100]{\includegraphics[width =0.32\columnwidth ]{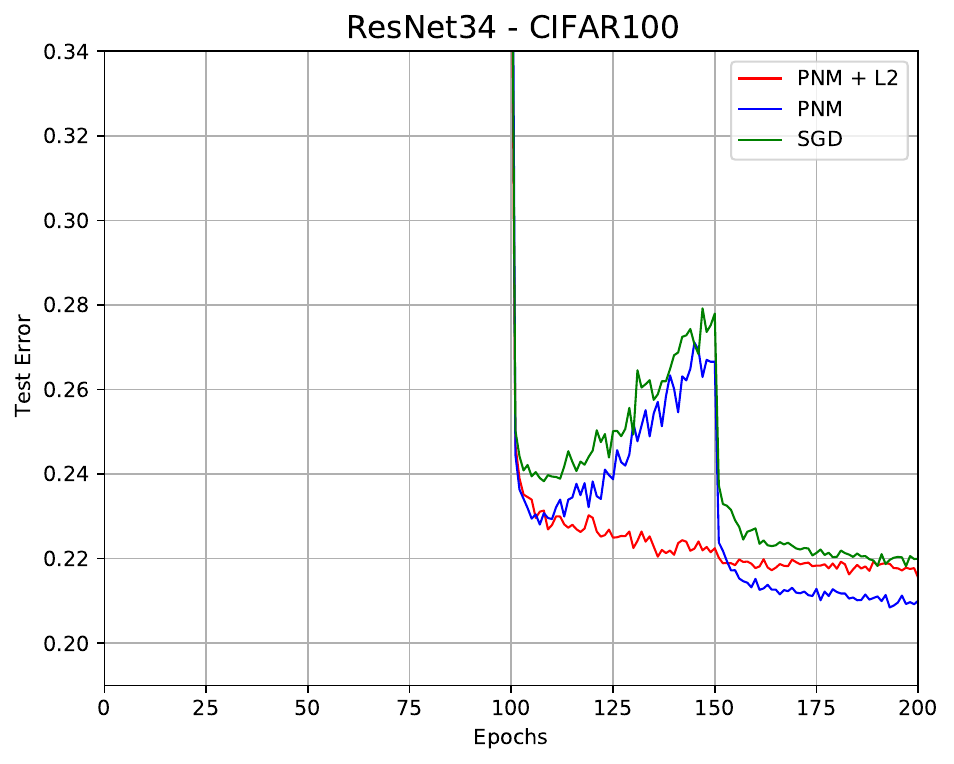}} 
\subfigure[GoogLeNet on CIFAR-100]{\includegraphics[width =0.32\columnwidth ]{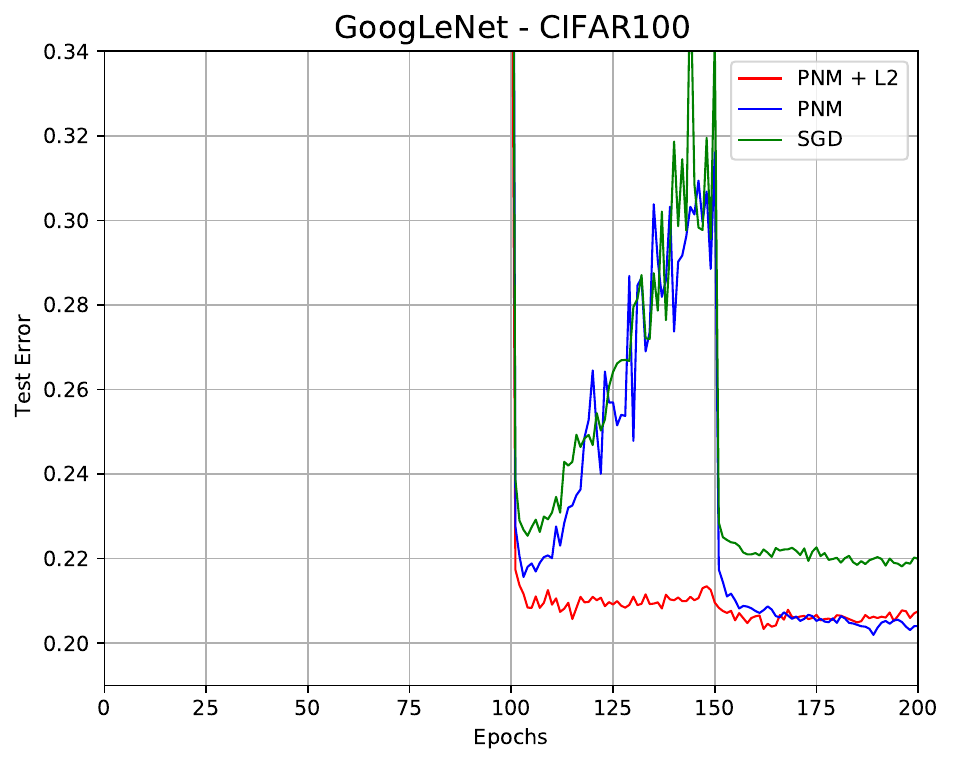}}  \\ 
\subfigure{\includegraphics[width =0.32\columnwidth ]{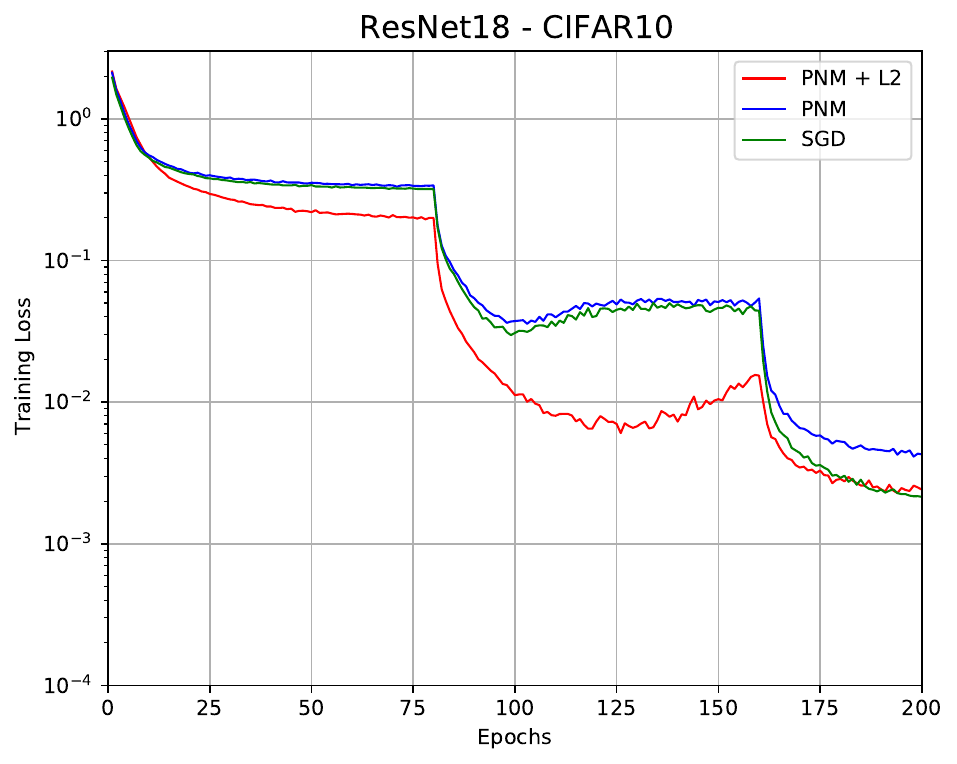}} 
\subfigure{\includegraphics[width =0.32\columnwidth ]{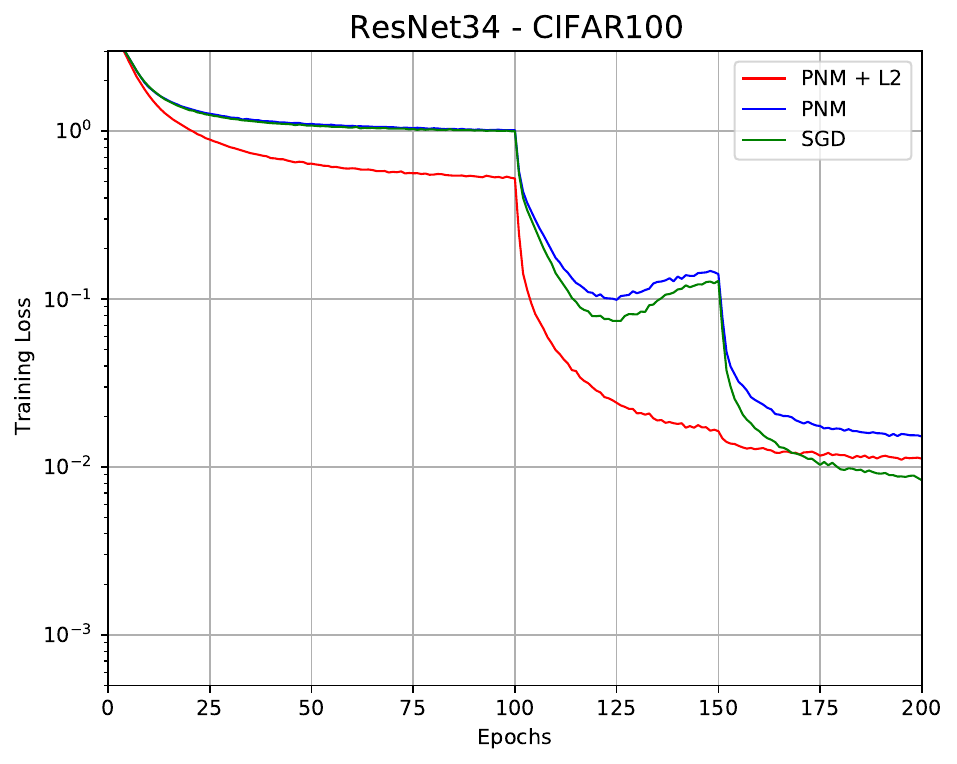}}
\subfigure{\includegraphics[width =0.32\columnwidth ]{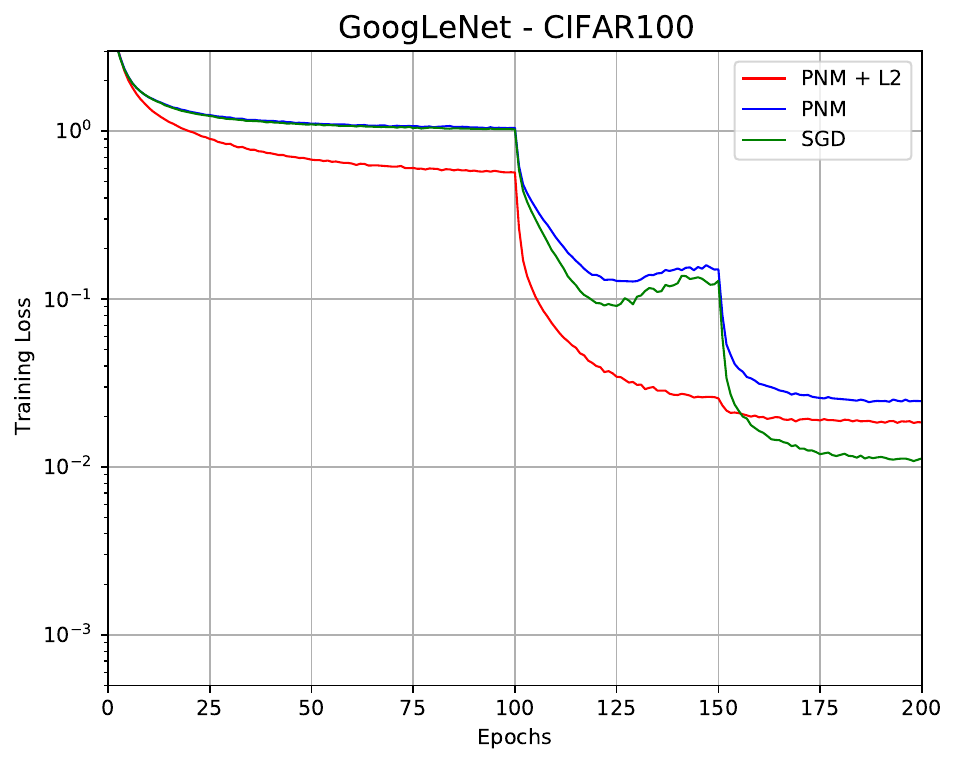}}   \\
\caption{ The learning curves of ResNet18, ResNet34, and GoogLeNet  on CIFAR-10 and CIFAR-100, respectively. Top Row: Test curves. Bottom Row: Training curves. We have two observations. First, PNM favors decoupled weight decay over $L_{2}$.  Second, with either $L_{2}$ regularization or decoupled weight decay, PNM generalizes significantly better than SGD.}
 \label{fig:cifarpnml2}
\end{figure}

\begin{figure}
\center
\subfigure[ResNet18 on CIFAR-10]{\includegraphics[width =0.32\columnwidth ]{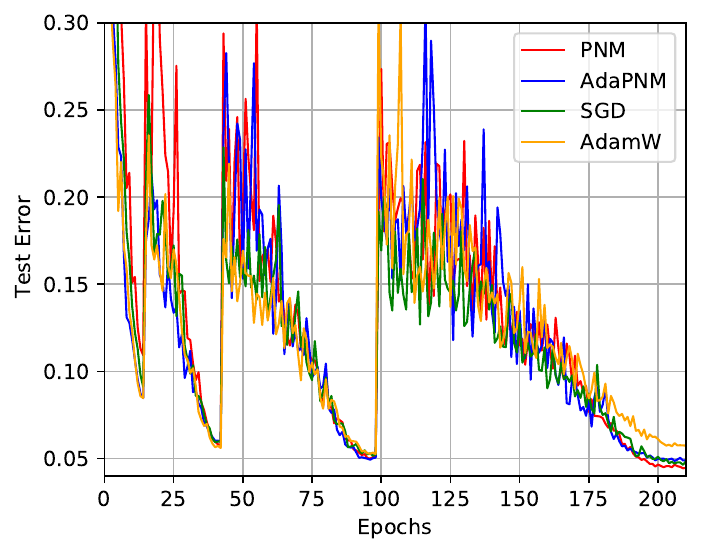}} 
\subfigure[VGG16 on CIFAR-10]{\includegraphics[width =0.32\columnwidth ]{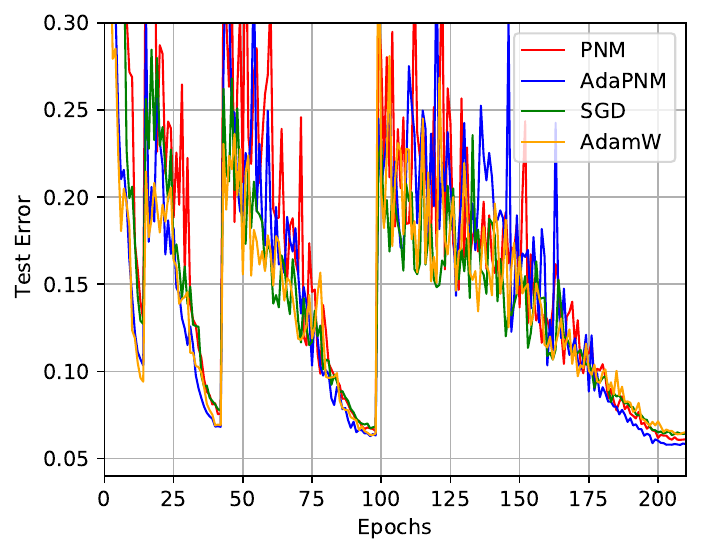}}  
\subfigure[ResNet34 on CIFAR-100]{\includegraphics[width =0.32\columnwidth ]{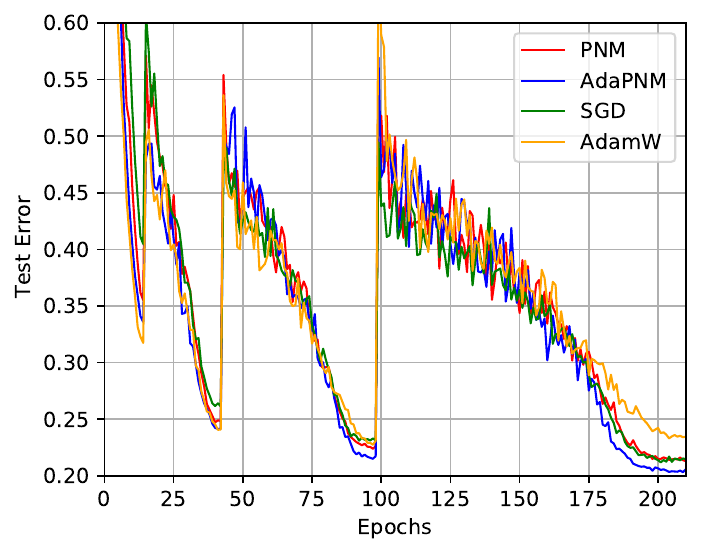}} \\
\caption{ The learning curves of ResNet18, VGG16, and ResNet34 on CIFAR-10 and CIFAR-100 with cosine annealing and warm restart schedulers. PNM and AdaPNM yields better results than SGD and AdamW.}
 \label{fig:cifarcosine}
\end{figure}

\begin{figure}
\centering
\subfigure{\includegraphics[width=0.32\columnwidth]{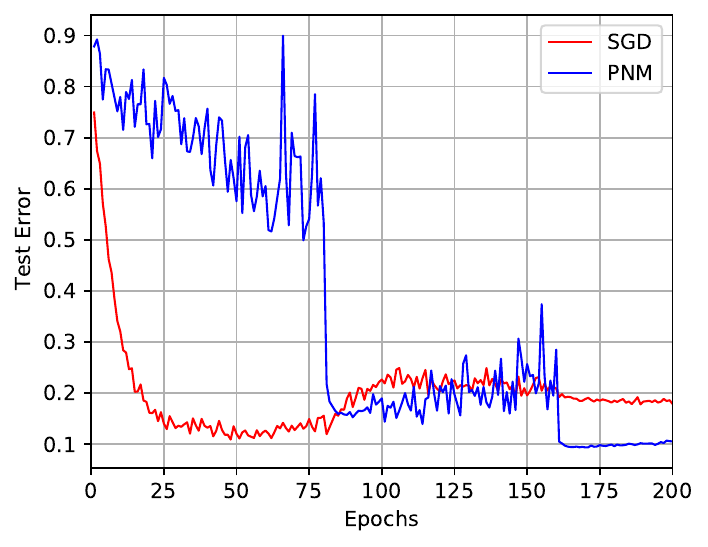}} 
\subfigure{\includegraphics[width=0.32\columnwidth]{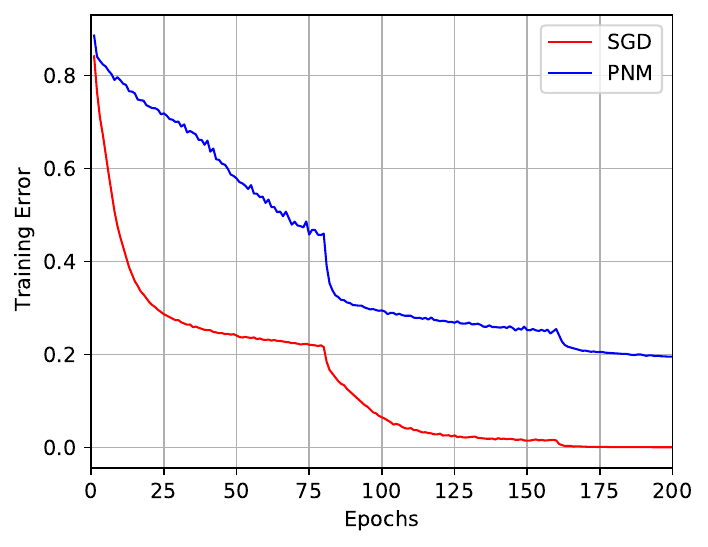}} \\
\subfigure{\includegraphics[width=0.32\columnwidth]{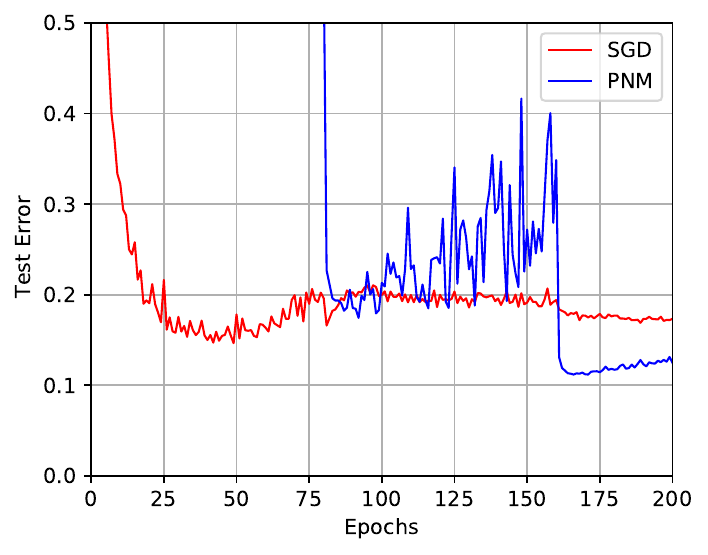}} 
\subfigure{\includegraphics[width=0.32\columnwidth]{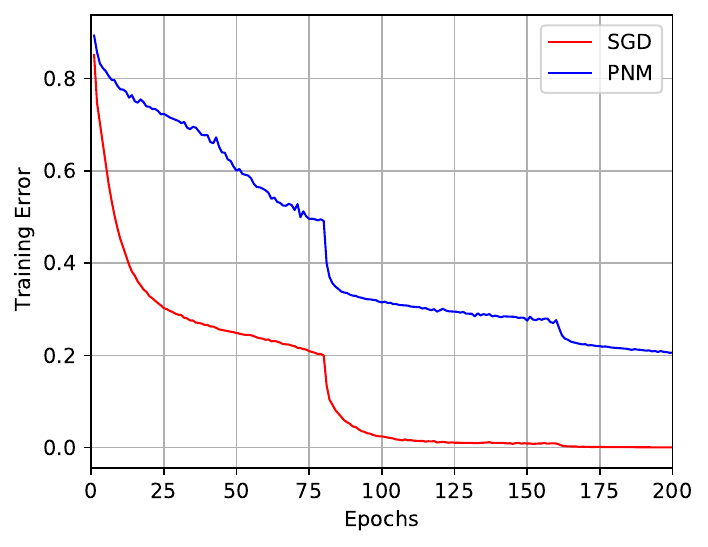}}
\caption{We compare PNM and SGD (with Momentum) by training ResNet34 on CIFAR-10 with $20\%$ asymmetric label noise and $20\%$ symmetric label noise. Left: Test Curve. Right: Training Curve. We observe that PNM with a large $\beta_{0}=70$ may effectively relieve memorizing noisy labels and almost only learn clean labels, while SGD almost memorize all noisy labels and has a nearly zero training error.}
 \label{fig:labelnoisepnmall}
\end{figure}

\begin{algorithm}
 \caption{Adaptive Positive-Negative Momentum (standard)} 
 \label{algo:adapnmstandard}
      $ m_{t} = \beta_{1}^{2} m_{t-2} + (1-\beta_{1}^{2}) g_{t} $\; \\
      $ \hat{m}_{t} =\frac{ (1 +  \beta_{0} ) m_{t}  - \beta_{0} m_{t-1} }{(1-\beta_{1}^{t})\sqrt{(1 +  \beta_{0} )^{2} + \beta_{0}^{2}}}   $\; \\
      $ v_{t} = \beta_{2}  v_{t-1} + (1 - \beta_{2} ) g_{t}^{2} $\; \\
      $ \hat{v}_{t} = \frac{v_{t}}{1-\beta_{2}^{t}} $\;  \\
      $ \theta_{t+1} = \theta_{t} -  \frac{\eta}{\sqrt{\hat{v}_{t}} + \epsilon} \hat{m}_{t} $\; 
\end{algorithm}

\section{Stochastic Gradient Noise Analysis}
\label{sec:noiseanalysis}

\begin{figure}
 \center
\subfigure[Pretrained Models]{\includegraphics[width =0.24\columnwidth ]{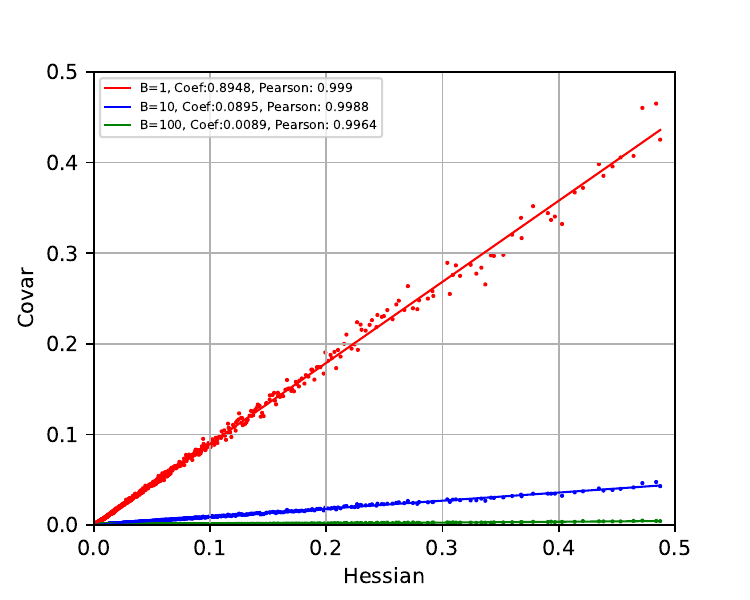}} 
\subfigure[Pretrained Models]{\includegraphics[width =0.24\columnwidth ]{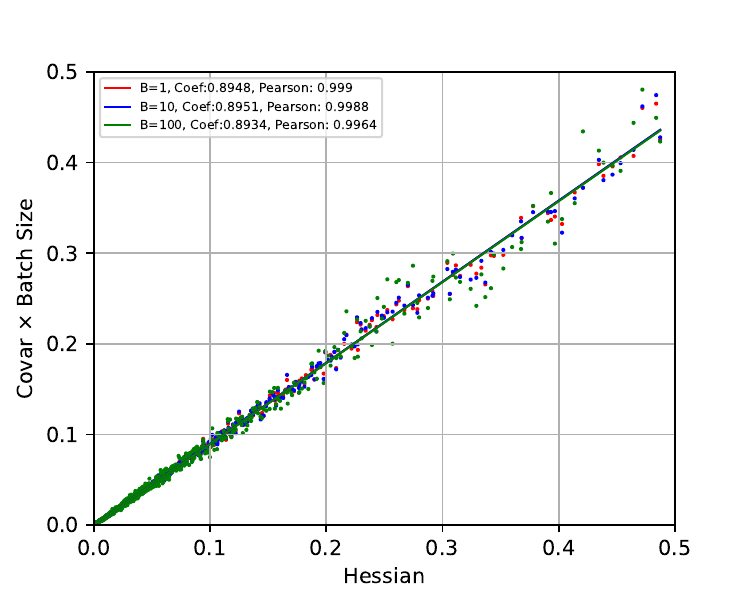}}  
\subfigure[Random Models]{\includegraphics[width =0.24\columnwidth ]{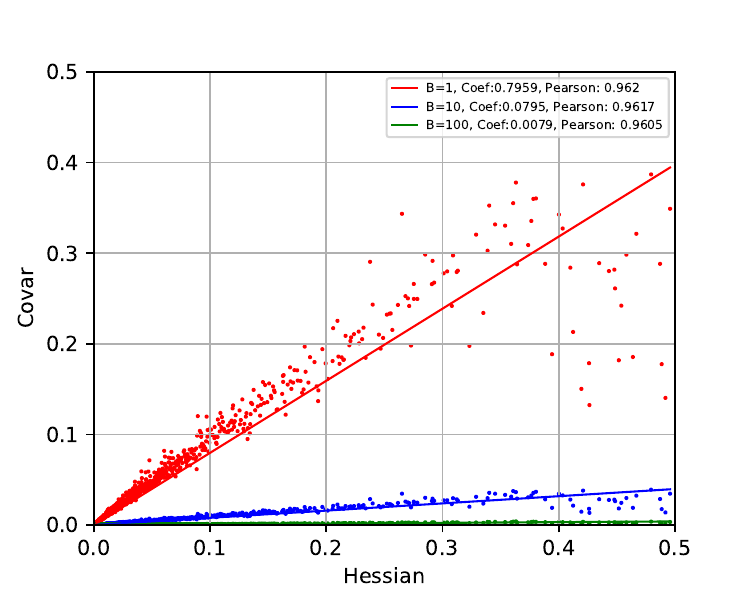}} 
\subfigure[Random Models]{\includegraphics[width =0.24\columnwidth ]{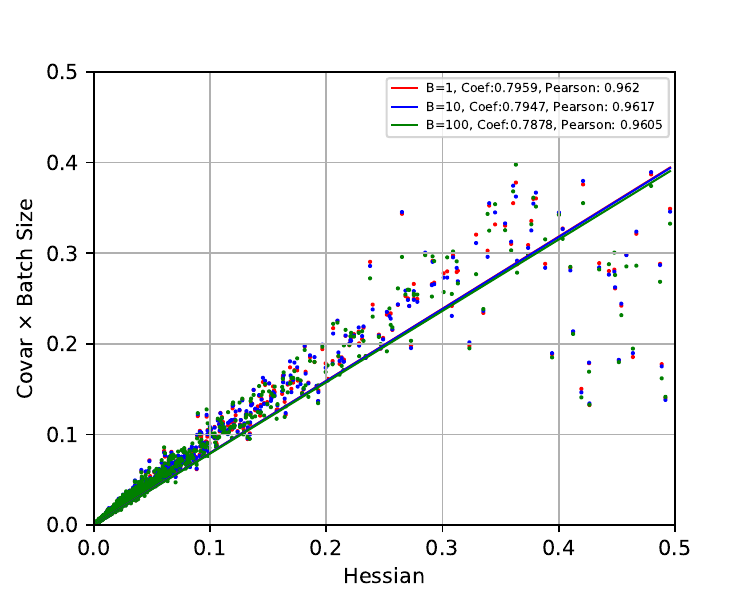}}  
\caption{\citet{xie2020diffusion} verified Equation \eqref{eq:covarhessian} by training pretrained and random three-layer fully-connected networks on MNIST \citep{lecun1998mnist}. Stochastic gradient noise covariance is still approximately proportional to the Hessian and inverse to the batch size $B$ even not around critical points. \citep{xie2020diffusion}}
\label{fig:DCH}
\end{figure}

In Figure \ref{fig:noisetype} and Figure \ref{fig:DCH}, \citet{xie2020diffusion,xie2020adai} discussed the covariance of SGN and why SGN is approximately Gaussian in common settings.

\begin{figure}
\centering
\subfigure[\small{Gradient Noise of one minibatch across parameters}]{\includegraphics[width =0.35\columnwidth ]{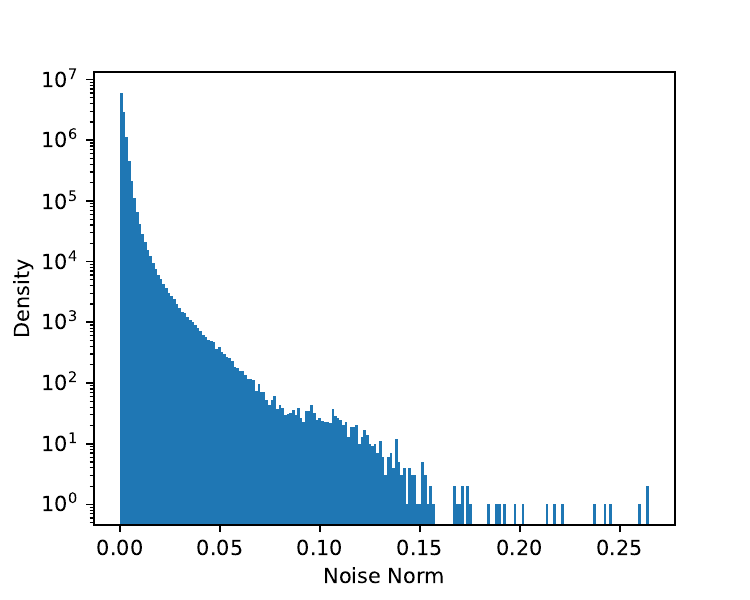}} 
\subfigure[\small{L\'evy noise}]{\includegraphics[width =0.35\columnwidth ]{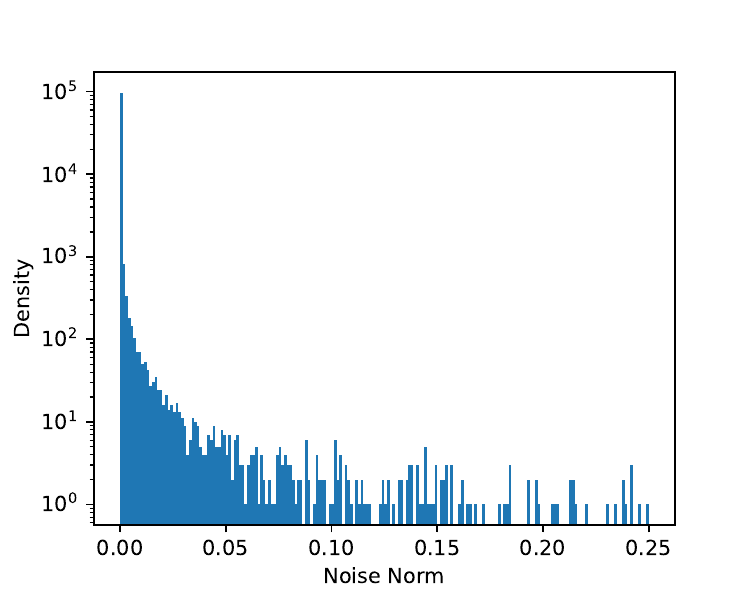}}  \\
\subfigure[\small{Gradient Noise of one parameter across minibatches}]{\includegraphics[width =0.35\columnwidth ]{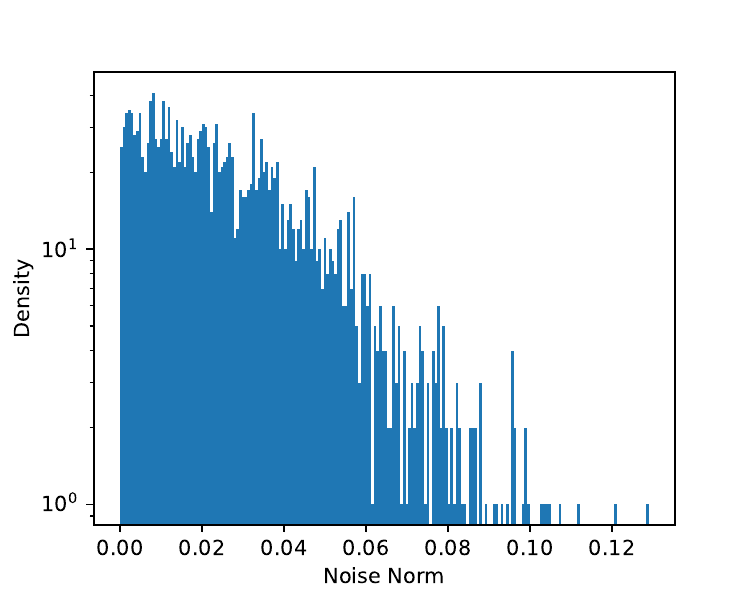}} 
\subfigure[\small{Gaussian noise}]{\includegraphics[width =0.35\columnwidth ]{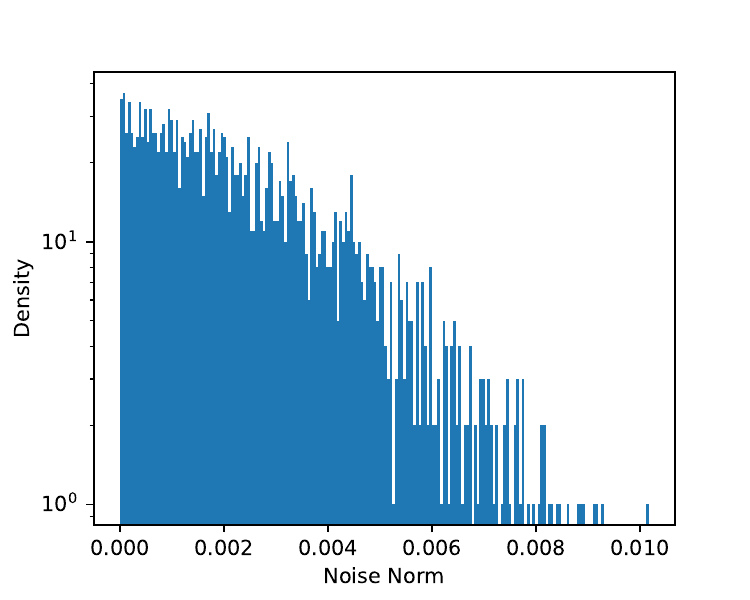}}  
\caption{The Stochastic Gradient Noise Analysis \citep{xie2020diffusion}. The histogram of the norm of the gradient noises computed with ResNet18 on CIFAR-10. Subfigure (a) follows \citet{simsekli2019tail} and computes ``stochastic gradient noise'' across parameters. Subfigure (c) follows the usual definition and computes stochastic gradient noise across minibatches. Obviously, SGN computed over minibatches is more like Gaussian noise rather than L\'evy noise.}
\label{fig:noisetype}
\end{figure}

\end{document}